\DeclareMathOperator{\vv}{vec}
\newcommand{\bigoh}{\mathcal{O}}
\begin{document}
\title{Second-order Democratic Aggregation} 

\titlerunning{Second-order Democratic Aggregation}
%
\author{Tsung-Yu Lin\inst{1} \and 
	Subhransu Maji\inst{1} \and
	 Piotr Koniusz\inst{2}}
\institute{College of Information and Computer Sciences\\
	University of Massachusetts Amherst\\
	\email{ \{tsungyulin,smaji\}@cs.umass.edu}
	\and
	Data61/CSIRO,
	Australian National University \\
	\email{piotr.koniusz@data61.csiro.au}
}

%
\authorrunning{Tsung-Yu Lin, Subhransu Maji and Piotr Koniusz}
%

\def\eg{\emph{e.g.}}
\def\ie{\emph{i.e.}}
\def\etc{\emph{etc.}}

\maketitle              
\begin{abstract}
Aggregated second-order features extracted from deep convolutional networks have been shown to be effective for texture generation, fine-grained recognition, material classification, and scene understanding.
In this paper, we study a class of orderless aggregation functions designed to minimize \emph{interference} or equalize \emph{contributions} in the context of second-order features and we show that they can be computed just as efficiently as their first-order counterparts and they have favorable properties over aggregation by summation.
Another line of work has shown that matrix power normalization after aggregation can significantly improve the generalization of second-order representations.
We show that matrix power normalization implicitly equalizes contributions during aggregation thus establishing a connection between matrix normalization techniques and prior work on minimizing interference.
Based on the analysis we present $\gamma$-democratic aggregators that interpolate between sum ($\gamma$=1) and democratic pooling ($\gamma$=0) outperforming both on several classification tasks.
Moreover, unlike power normalization, the $\gamma$-democratic aggregations can be computed in a low dimensional space by sketching that allows the use of 
very high-dimensional second-order features. This results in a state-of-the-art performance on several datasets.
\keywords{Second-order features, democratic pooling, matrix power normalization, tensor sketching}
\end{abstract}
\def\point{P}
\def\matrix{P}
\def\npoints{N}
\def\nshapes{S}
\def\nbasis{B}
\def\depth{D}
\def\scale{k}
\def\encoding{\mathbf{z}}
\newcommand{\para }[1]{\medskip \noindent {\underline {\bf #1}}}
\def\mrtnet{MRTNet\xspace}
\def\mrvae{MR-VAE\xspace}
\def\kdtree{kd-tree\xspace}

\section{Introduction}
\label{sec:intro}
Second-order statistics have been demonstrated to improve performance of classification on images of objects, scenes and textures as well as fine-grained problems, action classification and tracking \cite{tuzel_rc,porikli2006tracker,locallog-euclidean,guo2013action,carreira_secord,me_tensor,bilinear_finegrained,lin2017improved}. In the simplest form, such statistics are obtained by taking the outer product of some feature vectors and aggregating them over some region of interest which results in an auto-correlation \cite{carreira_secord,me_tensor_tech_rep} or covariance matrix \cite{tuzel_rc}. Such a second-order image descriptor is then passed as a feature to train a SVM, \etc~ Several recent works obtained an increase in accuracy after switching from the first- to second-order statistics \cite{me_tensor,me_tensor_tech_rep,bilinear_finegrained,lin_bcnn_pami,lin2017improved,yu_stat,koniusz2016tensor,koniusz2018deeper}. Further improvements were obtained by considering the impact of spectrum of such statistics on aggregation into the final representations \cite{me_tensor,me_tensor_tech_rep,murray2017interferences,secord_peihua_li,lin2017improved,koniusz2016tensor,koniusz2018deeper}. For instance, analysis conducted in \cite{me_tensor,me_tensor_tech_rep} concluded that decorrelating feature vectors from an image via the matrix power normalization has a positive impact on classification due to the signal whitening properties which prevent so-called {\em bursts} of features \cite{jegou_bursts}. However, evaluating the power of matrix is a costly procedure with complexity $\bigoh(d^\omega)$, where $2\!<\!\omega\!<\!2.376$ concerns the complexity of SVD. In recent CNN approaches \cite{secord_peihua_li,lin2017improved,koniusz2018deeper} which perform end-to-end learning, the complexity becomes a prohibitive factor for typical $d\geq 1024$ due to a costly backpropagation step which involves SVD or solving a Lyapunov equation \cite{lin2017improved} in every iteration of the CNN fine-tuning process; thus adding several hours of computations to training. However, another line of aggregation mechanisms aim to reweight the first-order feature vectors prior to their aggregation \cite{murray2017interferences} in order to balance their contributions to the final image descriptor. Such a reweighting scheme, called a democratic aggregation \cite{jegou_democratic,murray2017interferences}, is solved very efficiently by a modified Sinkhorn algorithm \cite{Knight_Sinkhorn}.

In this paper, we study democratic aggregation in the context of second-order feature descriptors and show that this feature descriptor has favorable properties when combined with the democratic aggregator which was applied originally to the first-order descriptors. We take a closer look at the relation between the reweighted representations and the matrix power normalization in terms of the variance of feature contributions. In addition, we propose a $\gamma$-democratic aggregation scheme which generalizes democratic aggregation and allows to interpolate between sum pooling and democratic pooling. We show that our formulation can be solved via the Sinkhorn algorithm as efficiently as approach \cite{murray2017interferences} while resulting in a performance comparable to the matrix power normalization. Computationally, our approach involves Sinkhorn iterations, which requires matrix-vector multiplications, and is faster by an order of magnitude even when compared to approximate matrix power normalization via the Newton's method, which involves matrix-matrix operations \cite{lin2017improved}. Unlike the power matrix normalization, our $\gamma$-democratic aggregation can be performed via sketching \cite{Pham_sketch,Gao_2016_CVPR}  enabling the use of high-dimensional feature vectors.

To summarize, our contributions are: (i) we propose a new second-order  $\gamma$-democratic aggregation, (ii) we obtain reweighting factors via the Sinkhorn algorithm which enjoys an order of magnitude speedup over the fast matrix power normalization via Newton's iterations while it achieves comparable results, (iii) we provide theoretical bounds on feature contributions in relation to the matrix power normalization, (iv) we present state-of-the-art results on several datasets by applying democratic aggregation of second-order representations with sketching.

\section{Related work}
\label{sec:related}
Mechanisms of aggregating first- and second-order features have been extensively studied in the context of image retrieval, texture and object recognition \cite{perronnin_fisher,perronnin_fisherimpr,sanchez_fisherpract,jegou_vlad,philippe_vlat,porikli2006tracker,tuzel2008detection,carreira_secord,me_tensor}. In what follows, we first describe shallow approaches and non-Euclidean aggregation schemes followed by the 
CNN-based approaches.

\paragraph{\textbf{Shallow Approaches. }}
Early approaches to aggregating second-order statistics include Region Covariance Descriptors \cite{porikli2006tracker,tuzel2008detection}, Fisher Vector Encoding~\cite{perronnin_fisher,perronnin_fisherimpr,sanchez_fisherpract} and Vector of Locally Aggregated Tensors~\cite{philippe_vlat}, to name but a few of approaches.

Region Covariance Descriptors capture co-occurrences of luminance, first- and second-order partial derivatives of images \cite{porikli2006tracker,tuzel2008detection} and, in some cases, even binary patterns \cite{elbcm_brod}. The main principle of these approaches is to aggregate the co-occurrences of some feature vectors into a matrix which represents an image. 

Fisher Vector Encoding \cite{perronnin_fisher} precomputes a visual vocabulary by clustering over a set of feature vectors and captures the element-wise squared difference between each feature vector and its nearest cluster center.
Subsequently, the re-normalization of the captured statistics with respect to the cluster variance and the sum aggregation are performed. Furthermore, extension \cite{perronnin_fisherimpr} proposes to apply the element-wise square root to the aggregated statistics which improves the classification results. 
Vector of Locally Aggregated Tensors 
extends Fisher Vector Encoding to second-order off-diagonal feature interactions. 


\paragraph{\textbf{Non-Euclidean Distances. }}
%
To take the full advantage of statistics captured by the scatter matrices, several works employ non-Euclidean distances. For positive definite matrices, geodesic distances (or their approximations) known from the Riemannian geometry are used \cite{PEN06,bhatia_pdm,arsigny2006log}. Power-Euclidean distance \cite{dryden_powereuclid} extends to semidefinite positive matrices. 
Distances such as Affine-Invariant Riemannian Metric \cite{PEN06,bhatia_pdm}, KL-Divergence Metric \cite{wang_jeffreys}, Jensen-Bregman LogDet Divergence \cite{anoop_logdet} and Log-Euclidean distance \cite{arsigny2006log} are frequently used for comparing scatter matrices resulting from aggregation of second-order statistics. However, the above distances are notoriously difficult to backpropagate through for end-to-end learning and often computationally prohibitive \cite{koniusz2018museum}.

\paragraph{\textbf{Pooling Normalizations. }}
Both first- and second-order aggregation methods often employ normalizations of pooled feature vectors. The early works on image retrieval apply the square root \cite{jegou_bursts} to aggregated feature vectors to limit the impact of frequently occurring features and boost the impact of infrequent and highly informative ones (so-called notion of feature {\em bursts}). The roots of this approach in computer vision can be traced back to so-called generalized histogram of intersection kernel \cite{boughorbel_intersect}. For second-order approaches, similar strategy is used by Fisher Vector Encoding \cite{perronnin_fisherimpr}. The notion of {\em bursts} is further studied in the context of Bags-of-Words approach as well scatter matrices and tensors for which their spectra are power normalized \cite{me_tensor_tech_rep,me_tensor,koniusz2016tensor} (so-called Eigenvalue Power Normalization or EPN for short). However, the square complexity of scatter matrices w.r.t. length of feature vectors deems them somewhat impractical in classification. A recent study \cite{jegou_democratic,murray2017interferences} shows how to exploit second-order image-wise statistics and reweight sets of feature vectors per image at the aggregation time to obtain an informative first-order representation. So-called Democratic Aggregation (DA) and Generalized Max-Pooling (GMP) strategies are proposed whose goal is to reweight feature vectors per image prior to the sum aggregation so that interference between frequent and infrequent feature vectors is minimized. Strategies such as EPN (Matrix Power Normalization, MPN for short, is a special case of EPN), DA and GMP can be seen as ways of equalizing contributions of feature vectors into the final image descriptor and they are closely related to Zero-phase Component Analysis (ZCA) whose role is to whiten the signal representation.

\paragraph{\textbf{Pooling and Aggregation in CNNs. }}
The early image retrieval and recognition CNN-based approaches aggregate first-order statistics extracted from the CNN maps \eg, \cite{orderless_pooling,deep_aggreg,netvlad}. In \cite{orderless_pooling}, multiple feture vectors are aggregated over multiple image regions. In \cite{deep_aggreg}, feature vectors are aggregated for retrieval. In \cite{netvlad}, so-called VLAD descriptor is extended to allow end-to-end training.

More recent approaches form co-occurrence patterns from CNN feature vectors similar in spirit to Region Covariance Descriptors. In \cite{bilinear_finegrained}, the authors combine two CNN streams of feature vectors via outer product and demonstrate that such a setup is robust for the task of the fine-grained image recognition. A recent approach \cite{deep_cooc} extracts feature vectors at two separate locations in feature maps and performs an outer product to form a CNN co-occurrence layer. 
%

Furthermore, a number of recent approaches are dedicated to performing backpropagation on the spectrum-normalized scatter matrices \cite{sminchisescu_matrix,vangol_riem_net,secord_peihua_li,lin2017improved,koniusz2018deeper}. In \cite{sminchisescu_matrix}, the authors employ the backpropagation via the SVD of matrix to implement the Log-Euclidean distance in end-to-end fashion. In \cite{secord_peihua_li}, the authors extend Eigenvalue Power Normalization \cite{me_tensor} to an end-to-end learning scenario which also requires to backpropagate via the SVD of matrix. Concurrently, approach \cite{lin2017improved} suggests to perform Matrix Power Normalization via the Newton's method and backpropagate w.r.t. the square root of matrix by solving a Lyapunov equation for greater numerical stability. 
An approach \cite{secord_mathieu} phrases the matrix normalization as the problem of robust covariance estimation. Lastly, compact bilinear pooling \cite{Gao_2016_CVPR} uses so-called tensor sketching \cite{Pham_sketch}. Where indicated, we also make use of tensor sketching in our work.

There has been no connection made between reweighting feature vectors and its impact on the spectrum of the corresponding scatter matrix. Our work closely related to the approaches \cite{jegou_democratic,murray2017interferences}, however, 
introduce a mechanisms of limiting the interference in the context of second-order features. We demonstrate their superiority over the first-order inference approaches \cite{jegou_democratic,murray2017interferences} and show  that we can obtain results comparable to the matrix square root aggregation \cite{lin2017improved} with much lower computational complexity at the training and testing stages.

\section{Method}
\label{sec:method}
Given a sequence of features ${\cal X} = (\mathbf{x}_1, \mathbf{x}_2, \ldots,\mathbf{x}_n)$, where $\mathbf{x}_i \in \mathbb{R}^d$, we are interested in a class of functions that compute an \emph{orderless aggregation} of the sequence to obtain a global descriptor $ \xi({\cal X})$. 
If the descriptor is orderless, it implies that any permutation of features does not effect the global descriptor. 
A common approach is to encode each feature using a non-linear function $\phi({\mathbf x})$ before aggregation via a simple symmetric function such as sum or max. For example, the global descriptor using sum pooling can be written as:
\begin{equation}
	\xi({\cal X}) = \sum_{\mathbf{x} \in {\cal X}} \phi({\mathbf x}).
\end{equation}
In this work, we investigate outer-product encoders, \ie~ $\phi(\mathbf{x}) = \vv(\mathbf{x}\mathbf{x}^T)$, where $\mathbf{x}^T$ denotes the transpose 
and $\vv(\cdot)$ is the vectorization operator. Thus, if $\mathbf{x}$ is $d$ dimensional then  $\phi(\mathbf{x})$ is $d^2$ dimensional. 

\subsection{Democratic aggregation}
The democratic aggregation approach was proposed in~\cite{murray2017interferences} to minimize interference or equalize contributions of each element in the sequence. The contribution of a feature is measured as the similarity of the feature to the overall descriptor. In the case of sum pooling, the contribution $C(\mathbf{x})$ of a feature $\mathbf{x}$ is given by:
\begin{equation}
C(\mathbf{x}) = \phi(\mathbf{x})^T \sum_{\mathbf{x}'\in {\cal X}} \phi(\mathbf{x}').
\end{equation}
For sum pooling, the contributions $C(\mathbf{x})$ may not be equal for all features $\mathbf{x}$.
In particular, the contribution is affected by both the norm and frequency of the feature. 
Democratic aggregation is a scheme that weights each feature by a scalar $\alpha(\mathbf{x})$ that depends on both $\mathbf{x}$ and the overall set of features in ${\cal X}$ such that the weighted aggregation $\xi({\cal X})$ satisfies:
\begin{equation}
\alpha(\mathbf{x}) \phi(\mathbf{x})^T \xi({\cal X}) = 
\alpha(\mathbf{x}) \phi(\mathbf{x})^T \sum_{\mathbf{x}'\in {\cal X}} \alpha(\mathbf{x}')\phi(\mathbf{x}') = C,~\forall \mathbf{x} \in {\cal X},
\end{equation}
under the constraint that $\forall \mathbf{x} \in {\cal X}$, $\alpha(\mathbf{x}) > 0$.
The above equation only depends on the dot product between the elements since: 
\begin{equation}
\alpha(\mathbf{x}) \sum_{\mathbf{x}'\in {\cal X}} \alpha(\mathbf{x}')\phi(\mathbf{x})^T \phi(\mathbf{x}') =
\alpha(\mathbf{x}) \sum_{\mathbf{x}'\in {\cal X}} \alpha(\mathbf{x}')k(\mathbf{x},\mathbf{x}'), 
\end{equation}
where $k(\mathbf{x},\mathbf{x}')$ denotes the dot product between the two vectors $\phi(\mathbf{x})$ and $\phi(\mathbf{x}')$.
Following the notation in~\cite{murray2017interferences}, if we denote $\mathbf{K}_{\cal X}$ to be the kernel matrix of the set ${\cal X}$, the above constraint is equivalent to finding a vector of weights $\boldsymbol{\alpha}$ such that:
\begin{equation}
\texttt{diag}(\boldsymbol{\alpha})\mathbf{K}\texttt{diag}(\boldsymbol{\alpha})\mathbf{1}_n = C\mathbf{1}_n,
\end{equation}
where $\texttt{diag}$ is the diagonalization operator and $\mathbf{1}_n$ is an $n$ dimensional vector of ones. In practice, the aggregated features $\xi({\cal X})$ are $\ell_2$ normalized hence the constant $C$ does not matter and can be set to 1.

The authors~\cite{murray2017interferences} noted that the above equation can be efficiently solved by a dampened Sinkhorn algorithm~\cite{Knight_Sinkhorn}. The algorithm returns a unique solution as long as certain conditions are met, namely the entries in $\mathbf{K}$ are non-negative and the matrix is not fully decomposable.
In practice, these conditions are not satisfied since the dot product between two features can be negative.
A solution proposed in~\cite{murray2017interferences} is to compute $\boldsymbol{\alpha}$ by setting the negative entries in $\mathbf{K}$ to zero.

For completeness, the dampened Sinkhorn algorithm is included in Algorithm~\ref{alg:sinkhorn}. Given $n$ features of $d$ dimensions, computing the kernel matrix takes $\bigoh(n^2d)$, whereas each Sinkhorn iteration takes $\bigoh(n^2)$ time.
In practice, 10 iterations are sufficient to find a good solution. The damping factor $\tau=0.5$ is typically used. This slows the convergence rate but avoids oscillations and other numerical issues associated with the undampened version ($\tau=1$).

\subsubsection{$\gamma$-democratic aggregation.} We propose a parametrized family of democratic aggregation functions that interpolate between sum pooling and fully democratic pooling. Given a parameter $0 \leq \gamma \leq 1$, the $\gamma$-democratic aggregation is obtained by solving for a vector of weights $\boldsymbol{\alpha}$ such that:
\begin{equation}
\texttt{diag}(\boldsymbol{\alpha})\mathbf{K}\texttt{diag}(\boldsymbol{\alpha})\mathbf{1}_n = (\mathbf{K}\mathbf{1}_n)^\gamma.
\end{equation}
When $\gamma=0$, this corresponds to the democratic aggregation, and when $\gamma=1$, this corresponds to sum aggregation since $\boldsymbol{\alpha} = \mathbf{1}_n$ satisfies the above equation.
The above equation can be solved by modifying the update rule for computing $\sigma$ in the Sinkhorn iterations to:
\begin{equation}
\sigma = \texttt{diag}(\boldsymbol{\alpha})\mathbf{K}\texttt{diag}
(\boldsymbol{\alpha})\mathbf{1}_n/(\mathbf{K}\mathbf{1}_n)^\gamma, 
\end{equation}
in Algorithm~\ref{alg:sinkhorn}, where $/$ denotes element-wise division. 
Thus, the solution can be equally efficient for any value of $\gamma$. 
Intermediate values of $\gamma$ allow the contributions $C(\mathbf{x})$ of each feature $\mathbf{x}$ within the set to vary and, in our experiments, we find this can lead to better results than the extremes (\ie, $\gamma=1$).

\begin{algorithm}[t]
\caption{Dampened Sinkhorn Algorithm}\label{alg:sinkhorn}
\begin{algorithmic}[1]
\Procedure{Sinkhorn}{$\mathbf{K}, \tau, \text{T}$}
\State $\boldsymbol{\alpha}\gets \mathbf{1}_n$
\For{$t=1$ to T}
\State $\boldsymbol{\sigma} = \texttt{diag}(\boldsymbol{\alpha})\mathbf{K}\texttt{diag}(\boldsymbol{\alpha})\mathbf{1}_n$ 
\State $
\boldsymbol{\alpha} \gets \boldsymbol{\alpha}/\boldsymbol{\sigma}^\tau
$
\EndFor\label{euclidendwhile}
\State \textbf{return} $\boldsymbol{\alpha}$
\EndProcedure
\end{algorithmic}
\end{algorithm}

\subsubsection{Second-order democratic aggregation.}
In practice, features extracted using deep ConvNets can be high-dimensional. 
For example, an input image $I$ is passed through layers of a ConvNet to obtain a feature map  $\mathbf{\Phi}(I)$ of size $W \times H \times D$. 
Here $d=D$ corresponds to the number of filters in the convolutional layer and $n=W\times H$ corresponds to the spatial resolution of the feature.
For state-of-the-art ConvNets from which features are typically extracted, the values of $n$ and $d$ are comparable and in the range of a few hundred to a thousand. 
Thus, explicitly realizing the outer products can be expensive. 
Below we show several properties of democratic aggregation with outer-product encoders. 
Some of these properties allow aggregation in a computationally and memory efficient manner. 

\begin{proposition} 
\label{prop:uniqueness}
For outer-product encoders, the solution to the $\gamma$-democratic kernels exists for all values of $\gamma$ as long as $||\mathbf{x}|| > 0, ~\forall \mathbf{x} \in {\cal X}$.
\end{proposition}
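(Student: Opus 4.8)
\medskip\noindent\textit{Proof (sketch).}\ The plan is to reduce the statement to two elementary properties of the kernel matrix and then settle existence by a variational argument. First I would compute the kernel induced by the outer-product encoder: using $\vv(A)^\top\vv(B)=\mathrm{tr}(A^\top B)$ one finds $k(\mathbf{x},\mathbf{x}')=\vv(\mathbf{x}\mathbf{x}^\top)^\top\vv(\mathbf{x}'\mathbf{x}'^\top)=(\mathbf{x}^\top\mathbf{x}')^2=\langle\mathbf{x},\mathbf{x}'\rangle^2$. Hence $\mathbf{K}$ is symmetric with entrywise nonnegative entries $K_{ij}=\langle\mathbf{x}_i,\mathbf{x}_j\rangle^2\ge 0$, and, being the Gram matrix of the vectors $\phi(\mathbf{x}_i)$, it is positive semidefinite. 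The decisive observation is that its diagonal is \emph{strictly} positive, $K_{ii}=\|\mathbf{x}_i\|^4>0$, precisely under the hypothesis $\|\mathbf{x}_i\|>0$. Since each row sum satisfies $(\mathbf{K}\mathbf{1}_n)_i\ge K_{ii}>0$, the right-hand side $\mathbf{b}:=(\mathbf{K}\mathbf{1}_n)^\gamma$ is strictly positive for every $\gamma$ (with $\mathbf{b}=\mathbf{1}_n$ at $\gamma=0$).

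Next I would observe that the scaling equation $\texttt{diag}(\boldsymbol{\alpha})\mathbf{K}\texttt{diag}(\boldsymbol{\alpha})\mathbf{1}_n=\mathbf{b}$, which reads coordinatewise as $\alpha_i(\mathbf{K}\boldsymbol{\alpha})_i=b_i$, is exactly the stationarity condition $\nabla F=0$ of the potential $F(\boldsymbol{\alpha})=\tfrac{1}{2}\boldsymbol{\alpha}^\top\mathbf{K}\boldsymbol{\alpha}-\sum_i b_i\log\alpha_i$ on the open orthant $\boldsymbol{\alpha}>0$, since $\partial F/\partial\alpha_i=(\mathbf{K}\boldsymbol{\alpha})_i-b_i/\alpha_i$. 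To work over an unconstrained domain I would substitute $\alpha_i=e^{u_i}$, turning $F$ into a smooth function of $\mathbf{u}\in\mathbb{R}^n$; since the entries $K_{jk}\ge 0$ and each map $\mathbf{u}\mapsto e^{u_j+u_k}$ is convex, this reparametrized $F$ is convex, so any critical point is automatically a global minimizer.

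The heart of the argument is to show that $F$ is coercive in the $\mathbf{u}$-coordinates, and here I would split the behaviour at infinity. If $\max_i u_i\to+\infty$, then picking $i$ that attains the maximum, the single diagonal term $\tfrac{1}{2}K_{ii}e^{2u_i}$ grows like $e^{2\max_i u_i}$ and dominates the at-most-linear drift $-\sum_j b_j u_j\ge -(\sum_j b_j)\max_i u_i$, so $F\to+\infty$; this is exactly where $K_{ii}>0$ is needed. If instead all $u_i$ remain bounded above while some $u_i\to-\infty$, every term $e^{u_j+u_k}$ stays bounded and all $K_{jk}\ge 0$, so the quadratic part is bounded, whereas $-b_i u_i\to+\infty$ because $b_i>0$, again forcing $F\to+\infty$. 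Thus $F$ is continuous and coercive on $\mathbb{R}^n$, so by the Weierstrass theorem it attains a minimum; at that point $\nabla F=0$ and $\alpha_i=e^{u_i}>0$ provides the sought positive solution, for every value of $\gamma$.

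The step I expect to be the main obstacle is this coercivity/boundary analysis, and it is instructive that the hypothesis $\|\mathbf{x}_i\|>0$ enters it twice: it makes the diagonal of $\mathbf{K}$ strictly positive (controlling the directions $u_i\to+\infty$) and makes $\mathbf{b}$ strictly positive (controlling the directions $u_i\to-\infty$). By contrast, for first-order encoders the kernel entries $\langle\mathbf{x},\mathbf{x}'\rangle$ may be negative, so the classical Sinkhorn existence hypotheses (nonnegativity, total support) can fail; the squaring built into the outer-product encoder removes this difficulty and reduces existence to the clean convex problem above, with the positive semidefiniteness of $\mathbf{K}$, inherited from its being a Gram matrix, serving as the structural reason this variational formulation is well posed.
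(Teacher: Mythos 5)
Your proof is correct, and it takes a genuinely different route from the paper's. Both arguments open identically, by computing $k(\mathbf{x},\mathbf{x}') = \vv(\mathbf{x}\mathbf{x}^T)^T\vv(\mathbf{x}'\mathbf{x}'^T) = (\mathbf{x}^T\mathbf{x}')^2 \geq 0$; from there the paper simply notes the entrywise nonnegativity of $\mathbf{K}$, asserts that $\mathbf{K}$ is strictly positive definite whenever $\|\mathbf{x}\|>0$ for all $\mathbf{x}\in{\cal X}$, and delegates existence to the cited matrix-scaling literature, whereas you give a self-contained variational proof: the scaling equation is the stationarity condition of $F(\boldsymbol{\alpha})=\tfrac{1}{2}\boldsymbol{\alpha}^T\mathbf{K}\boldsymbol{\alpha}-\sum_i b_i\log\alpha_i$, the substitution $\alpha_i=e^{u_i}$ makes $F$ convex on $\mathbb{R}^n$, coercivity follows from the two facts $K_{ii}=\|\mathbf{x}_i\|^4>0$ and $b_i>0$, and Weierstrass finishes. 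Your route buys two real advantages. First, it is more robust than the paper's: the claim that $\mathbf{K}$ is \emph{strictly} positive definite under the stated hypothesis is false in general (duplicate features $\mathbf{x}_i=\mathbf{x}_j$, or any linear dependence among the vectors $\vv(\mathbf{x}_i\mathbf{x}_i^T)$, makes $\mathbf{K}$ singular while $\|\mathbf{x}_i\|>0$ still holds), whereas your argument uses only entrywise nonnegativity and the positive diagonal, which do follow from the hypothesis; the paper's appeal to the scaling literature can be repaired (symmetric nonnegativity plus a positive diagonal yields total support), but your proof sidesteps this entirely. Second, your argument covers the general right-hand side $(\mathbf{K}\mathbf{1}_n)^\gamma$ uniformly in $\gamma$, while the classical Sinkhorn existence theorems are stated for constant target row sums and the extension to prescribed positive marginals is left implicit in the paper. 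What the paper's route buys is brevity and a direct link to the dampened Sinkhorn algorithm actually used, together with the uniqueness statements available in that literature. One small quibble: your closing sentence credits positive semidefiniteness of $\mathbf{K}$ as the structural reason the variational formulation works, but PSD is never used in your argument --- convexity of the reparametrized $F$ comes from the entrywise bound $K_{jk}\geq 0$, and coercivity from the positive diagonal and positive right-hand side, so the Gram structure is incidental there.
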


\begin{proof}
For the outer-product encoder we have:

\begin{eqnarray*}
k(\mathbf{x},\mathbf{x}') = \phi(\mathbf{x})^T\phi(\mathbf{x}')
= \vv(\mathbf{x}\mathbf{x}^T)^T\vv(\mathbf{x'}\mathbf{x}'^T) 
= (\mathbf{x}^T \mathbf{x}')^2 \geq 0.
\end{eqnarray*}

Thus, all the entries of the kernel matrix are non-negative and the kernel matrix is strictly positive definite when $||\mathbf{x}|| > 0,  ~\forall \mathbf{x} \in {\cal X} $. This is a sufficient condition for the solution to exist \cite{Knight_Sinkhorn}. 
Note that the kernel matrix of the outer product encoders is positive even when $\mathbf{x}^T \mathbf{x}' < 0$.
\end{proof}

\begin{proposition}
\label{prop:comlexity}
For outer-product encoders, the solution $\boldsymbol{\alpha}$ to the $\gamma$-democratic kernels can be computed in $\bigoh(n^2d)$ time and $\bigoh(n^2 + nd)$ space.
\end{proposition}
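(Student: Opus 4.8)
The plan is to show that the entire computation can be carried out without ever forming the $d^2$-dimensional encoded vectors $\phi(\mathbf{x}) = \vv(\mathbf{x}\mathbf{x}^T)$, which by themselves would cost $\bigoh(nd^2)$ time and space. The key ingredient is the identity established in Proposition~\ref{prop:uniqueness}, namely $k(\mathbf{x}_i,\mathbf{x}_j) = (\mathbf{x}_i^T\mathbf{x}_j)^2$, which lets me assemble the kernel matrix $\mathbf{K}$ directly from the raw $d$-dimensional features rather than from their outer products.

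First I would stack the features into a matrix $X = [\mathbf{x}_1,\ldots,\mathbf{x}_n] \in \mathbb{R}^{d\times n}$ and compute the Gram matrix $\mathbf{G} = X^TX$, whose $(i,j)$ entry is $\mathbf{x}_i^T\mathbf{x}_j$. This matrix--matrix product takes $\bigoh(n^2 d)$ time and $\bigoh(n^2 + nd)$ space, the latter to hold $X$ and $\mathbf{G}$. By the squared-dot-product identity, the kernel matrix is then the element-wise (Hadamard) square $\mathbf{K} = \mathbf{G}\circ\mathbf{G}$, costing only $\bigoh(n^2)$ additional time and no extra asymptotic space.

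Next I would account for the Sinkhorn iterations. The modified update requires the target vector $(\mathbf{K}\mathbf{1}_n)^\gamma$, computed once as a matrix--vector product in $\bigoh(n^2)$ time, and in each iteration the quantity $\sigma = \texttt{diag}(\boldsymbol{\alpha})\mathbf{K}\texttt{diag}(\boldsymbol{\alpha})\mathbf{1}_n$. Since $\texttt{diag}(\boldsymbol{\alpha})\mathbf{1}_n = \boldsymbol{\alpha}$, this reduces to a single matrix--vector multiplication $\mathbf{K}\boldsymbol{\alpha}$ followed by an element-wise scaling by $\boldsymbol{\alpha}$, costing $\bigoh(n^2)$ per iteration and only $\bigoh(n)$ space for the weight and scaling vectors. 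With the number of iterations held fixed (ten in practice), the whole Sinkhorn phase is $\bigoh(n^2)$.

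Summing the stages, the dominant term is the $\bigoh(n^2 d)$ Gram-matrix computation, giving total time $\bigoh(n^2 d)$ and total space $\bigoh(n^2 + nd)$, as claimed. The only subtlety, and the step I would emphasize, is verifying that the outer-product structure is genuinely never needed: the squared-dot-product identity reduces every operation on the $d^2$-dimensional encodings to operations on the $n\times n$ kernel and the $d$-dimensional raw features, which is precisely what removes any dependence on $d^2$.
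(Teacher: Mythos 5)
Your proposal is correct and follows the same route as the paper's proof: both exploit the identity $k(\mathbf{x},\mathbf{x}') = (\mathbf{x}^T\mathbf{x}')^2$ to obtain $\mathbf{K}$ as the element-wise square of the raw-feature Gram matrix, computed in $\bigoh(n^2d)$ time and $\bigoh(n^2+nd)$ space, so that the $d^2$-dimensional encodings are never materialized. Your write-up is somewhat more explicit than the paper's (which simply notes that kernel computation dominates), since you also spell out the $\bigoh(n^2)$ per-iteration cost of the Sinkhorn updates, but the argument is the same.
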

\begin{proof}
The running time of the Sinkhorn algorithm is dominated by the time to compute the kernel matrix $\mathbf{K}$. Naively computing the kernel matrix for $d^2$ dimensional features would take $\bigoh(n^2d^2)$ time and $\bigoh(n^2 + nd^2)$ space. However, since the kernel entries of the outer products are just the square of the kernel entries of the features before the encoding step, one can compute the kernel $\mathbf{K}$ by simply squaring the kernel of the raw features, which can be computed in $\bigoh(n^2d)$ time and $\bigoh(n^2+nd)$ space. Thus the weights $\boldsymbol{\alpha}$ for the second-order features can also be computed in $\bigoh(n^2d)$ time and $\bigoh(n^2 + nd)$ space.
\end{proof}

\begin{proposition}
\label{prop:sketch}
For outer-product encoders, $\gamma$-democratic aggregation $\xi({\cal X})$ can be computed with low-memory overhead using Tensor Sketching.
\end{proposition}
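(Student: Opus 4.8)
The plan is to split the computation of $\xi({\cal X}) = \sum_i \alpha(\mathbf{x}_i)\,\vv(\mathbf{x}_i\mathbf{x}_i^T)$ into two stages---first obtaining the democratic weights $\boldsymbol{\alpha}$, then forming the weighted aggregation---and to argue that neither stage ever needs to materialize a $d^2$-dimensional encoding $\phi(\mathbf{x}) = \vv(\mathbf{x}\mathbf{x}^T)$. For the first stage I would simply invoke Proposition~\ref{prop:comlexity}: since the kernel entries of the outer-product encoder are $k(\mathbf{x},\mathbf{x}') = (\mathbf{x}^T\mathbf{x}')^2$, the Sinkhorn iterations producing $\boldsymbol{\alpha}$ run entirely on the (element-wise squared) raw-feature kernel in $\bigoh(n^2 d)$ time and $\bigoh(n^2 + nd)$ space, with no outer products formed. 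The entire memory concern therefore collapses onto the final weighted sum, which naively costs $\bigoh(d^2)$.

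Next I would bring in the Tensor Sketch operator $\psi : \mathbb{R}^{d^2} \to \mathbb{R}^D$ with $D \ll d^2$, following~\cite{Pham_sketch} and its use for bilinear features in~\cite{Gao_2016_CVPR}. The two facts I would cite are: (i) the sketch of a rank-one encoding $\vv(\mathbf{x}\mathbf{x}^T)$ equals the circular convolution of two independent Count Sketches of the raw vector $\mathbf{x}$, computable via FFTs in $\bigoh(d + D\log D)$ time and $\bigoh(d + D)$ space \emph{without} ever instantiating the outer product; and (ii) the sketch approximately preserves inner products, $\langle \psi(\vv(\mathbf{x}\mathbf{x}^T)),\, \psi(\vv(\mathbf{x}'\mathbf{x}'^T))\rangle \approx (\mathbf{x}^T\mathbf{x}')^2$, so the sketched descriptor is a faithful low-dimensional surrogate for the true one.

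The crux of the argument is that Count Sketch, and hence the Tensor Sketch $\psi$, is a \emph{linear} map, so it commutes with the $\boldsymbol{\alpha}$-weighted summation:
\[
\psi\big(\xi({\cal X})\big) = \psi\Big(\sum_i \alpha(\mathbf{x}_i)\,\vv(\mathbf{x}_i\mathbf{x}_i^T)\Big) = \sum_i \alpha(\mathbf{x}_i)\,\psi\big(\vv(\mathbf{x}_i\mathbf{x}_i^T)\big).
\]
Concretely I would compute the sketch of each feature's outer product and accumulate it, scaled by the precomputed weight $\alpha(\mathbf{x}_i)$, into a single $D$-dimensional accumulator, never storing a $d^2$-dimensional vector. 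This gives total cost $\bigoh(n^2 d + nD\log D)$ in time and $\bigoh(n^2 + nd + D)$ in space, with the closing $\ell_2$ normalization applied directly in the sketched space.

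The main obstacle---or rather the single property that makes the whole scheme hold---is the \emph{decoupling} between the two stages combined with the linearity of $\psi$. The weights depend only on the exact squared kernel and are therefore computed without any sketching error, while the sketch enters solely at aggregation time, so the approximation is confined to the descriptor and is controlled by the sketch dimension $D$. The step I would want to verify most carefully is that linearity genuinely lets the weighting pass inside $\psi$ (so that summing weighted per-feature sketches equals the sketch of the weighted sum); this is exactly the structural feature that matrix power normalization lacks, and it is what enables high-dimensional $\mathbf{x}$ to be handled with low memory overhead.
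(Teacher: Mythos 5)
Your proposal is correct and follows essentially the same route as the paper: the paper's proof likewise rests on the fact that the $\gamma$-democratic descriptor is a \emph{linear} combination of outer products, so the (linear) Tensor Sketch of the aggregate equals the $\boldsymbol{\alpha}$-weighted sum of per-feature sketches, with the weights themselves obtained exactly via the squared raw-feature kernel as in Proposition~\ref{prop:comlexity}. You spell out the two-stage decoupling and the complexity accounting more explicitly than the paper does, but the key idea is identical.
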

\begin{proof}
Let $\theta$ be a low-dimensional embedding that approximates the inner product between two outer-products, \ie, 
\begin{equation}
\theta(\mathbf{x})^T\theta(\mathbf{x'}) \sim  \vv(\mathbf{x}\mathbf{x}^T)^T\vv(\mathbf{x'}\mathbf{x}'^T), 
\end{equation}
and $\theta(\mathbf{x}) \in \mathbb{R}^{k}$ with $k << d^2$. Since the $\gamma$-democratic aggregation of ${\cal X}$ is a linear combination of the outer-products, the overall feature $\xi({\cal X})$ can be written as:
\begin{equation}
\xi({\cal X}) = \sum_{\mathbf{x} \in {\cal X}}  \alpha(\mathbf{x}) \mathbf{x}\mathbf{x}^T \sim  \sum_{\mathbf{x} \in {\cal X}} \alpha(\mathbf{x}) \theta(\mathbf{x}).
\end{equation}
\end{proof}
Thus, instead of realizing the overall feature $\xi({\cal X})$ of size $d^2$, one can use the embedding $\theta$ to obtain a feature of size $k$ as a democratic aggregation of the approximate outer-products. One example of an approximate outer-product embedding is the Tensor Sketching (TS) approach of Pham and Pagh~\cite{Pham_sketch}. 
Tensor sketching has been used to approximate second-order sum pooling~\cite{Gao_2016_CVPR} resulting in an order-of-magnitude savings in space at a marginal loss in performance on classification tasks.
Our experiments show that sketching also performs well in the context of democratic aggregation.

\subsection{Spectral normalization of second-order representations}
A different line of work~\cite{carreira_secord,lin2017improved,secord_peihua_li,secord_mathieu} has investigated matrix functions to normalize the second-order representations obtained by sum pooling. For example, the improved bilinear pooling \cite{lin2017improved} and second-order approaches \cite{me_tensor_tech_rep,me_tensor,koniusz2018deeper} construct a global representation by sum pooling of outer-products: 
\begin{equation}
\mathbf{A} = \sum_{\mathbf{x} \in {\cal X}} \mathbf{x}\mathbf{x}^T.
\end{equation}
The matrix $\mathbf{A}$ is subsequently normalized using matrix power function $\mathbf{A}^p$ with $0 < p < 1$. When $p=1/2$, this corresponds to the matrix square-root which is defined as matrix $\mathbf{Z}$ such that $\mathbf{Z}\mathbf{Z}=\mathbf{A}$. 
Matrix function can be computed using the Singular Value Decomposition (SVD).
Given matrix $\mathbf{A}$ with a SVD given by $\mathbf{A} = \mathbf{U}\Lambda\mathbf{U}^T$, where the matrix $\Lambda = \texttt{diag}(\lambda_1,\lambda_2,...,\lambda_d)$, with $\lambda_{i} \geq \lambda_{i+1}$, the matrix function $f$ can be written as $\mathbf{Z} = f(\mathbf{A}) = \mathbf{U}g(\Lambda)\mathbf{U}^T$, where $g$ is applied to the elements in the diagonal of $\Lambda$. Thus, the matrix power can be computed as $\mathbf{A}^p = \mathbf{U}\Lambda^p\mathbf{U}^T = \mathbf{U}\texttt{diag}(\lambda_1^p,\lambda_2^p,...,\lambda_d^p)\mathbf{U}^T$. 
Such spectral normalization techniques  scale the spectrum of the matrix $\mathbf{A}$. The following establishes a connection between the spectral normalization techniques and democratic pooling.

Let $\mathbf{\hat{A}}^p$ be the $\ell_2$ normalized version of $\mathbf{A}^p$ and $r_{\max}$ and $r_{\min}$ be the maximum and minimum squared radii of the data $\mathbf{x} \in {\cal X}$ defined as:
\begin{equation}
r_{\max}=\max_{~\mathbf{x}~\in {\cal X}}||\mathbf{x}||^2,~r_{\min}=\min_{~\mathbf{x}~\in {\cal X}}||\mathbf{x}||^2.
\end{equation}

As earlier, let $C(\mathbf{x})$ be the contribution of the vector $\mathbf{x}$ to the the aggregated representation defined as:
\begin{equation}
\label{eq:contributions}
C(\mathbf{x}) = \vv(\mathbf{x}\mathbf{x}^T)^T\vv(\mathbf{\hat{A}}^p).
\end{equation}

\begin{proposition}
\label{prop:four_props}
The following properties hold true:
\begin{enumerate}[itemsep=1.5mm]
\item The $\ell_2$ norm of $\vv(\mathbf{A}^p)$ is $\rho(\mathbf{A}^p) = ||\vv(\mathbf{A}^p)|| = \left(\sum_i \lambda_i^{2p}\right)^{1/2}$.
\item $\sum_{\mathbf{x} \in {\cal X}}C(\mathbf{x}) = \texttt{Trace}(\mathbf{A}^{1+p}/||\mathbf{A}^p||) = \left(\sum_i \lambda_i^{1+p}\right)/\rho(\mathbf{A}^p)$.
\item The maximum value $M = \max_{\mathbf{x} \in {\cal X}} C(\mathbf{x}) \leq r_{\max} \lambda_1^{p}/\rho(\mathbf{A}^p)$.
\item The minimum value $m = \min_{\mathbf{x} \in {\cal X}} C(\mathbf{x}) \geq r_{\min} \lambda_d^{p}/\rho(\mathbf{A}^p)$.
\end{enumerate}
\end{proposition}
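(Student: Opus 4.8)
The plan is to reduce every claim to two elementary facts about symmetric matrices: the vectorization identity $\vv(\mathbf{M})^T\vv(\mathbf{N}) = \texttt{Trace}(\mathbf{M}^T\mathbf{N})$, and the observation that $\mathbf{A}=\sum_{\mathbf{x}}\mathbf{x}\mathbf{x}^T$ is symmetric positive semidefinite, so its SVD $\mathbf{A}=\mathbf{U}\Lambda\mathbf{U}^T$ doubles as an eigendecomposition and $\mathbf{A}^p=\mathbf{U}\Lambda^p\mathbf{U}^T$ has eigenvalues $\lambda_i^p$ ordered the same way as the $\lambda_i$ (since $p>0$). Everything else is bookkeeping in the eigenbasis.

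For property (1), I would note that the $\ell_2$ norm of $\vv(\mathbf{A}^p)$ is the Frobenius norm of $\mathbf{A}^p$, which equals $\sqrt{\texttt{Trace}((\mathbf{A}^p)^T\mathbf{A}^p)}=\sqrt{\texttt{Trace}(\mathbf{A}^{2p})}$ because $\mathbf{A}^p$ is symmetric. Since the trace is the sum of eigenvalues and is invariant under orthogonal conjugation by $\mathbf{U}$, this equals $(\sum_i \lambda_i^{2p})^{1/2}$, which is exactly $\rho(\mathbf{A}^p)$.

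The crux of the remaining three claims is a single rewriting of the contribution. Using the vectorization identity and the symmetry of $\mathbf{x}\mathbf{x}^T$, I would write $C(\mathbf{x}) = \texttt{Trace}(\mathbf{x}\mathbf{x}^T\hat{\mathbf{A}}^p)$, then apply the cyclic property of the trace to collapse this to the quadratic form $C(\mathbf{x}) = \mathbf{x}^T\hat{\mathbf{A}}^p\mathbf{x} = \mathbf{x}^T\mathbf{A}^p\mathbf{x}/\rho(\mathbf{A}^p)$. With $C(\mathbf{x})$ expressed this way, property (2) follows by summing and pulling the sum inside the trace: $\sum_{\mathbf{x}} C(\mathbf{x}) = \texttt{Trace}(\mathbf{A}\hat{\mathbf{A}}^p) = \texttt{Trace}(\mathbf{A}^{1+p})/\rho(\mathbf{A}^p)$, and expanding the trace in the eigenbasis yields $(\sum_i \lambda_i^{1+p})/\rho(\mathbf{A}^p)$.

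For properties (3) and (4) I would bound the Rayleigh quotient between its extreme eigenvalues, $\lambda_d^p\|\mathbf{x}\|^2 \leq \mathbf{x}^T\mathbf{A}^p\mathbf{x} \leq \lambda_1^p\|\mathbf{x}\|^2$, divide by $\rho(\mathbf{A}^p)$, and use $r_{\min}\leq\|\mathbf{x}\|^2\leq r_{\max}$; taking the max (resp.\ min) over $\mathbf{x}\in{\cal X}$ then gives the stated upper (resp.\ lower) bound. I do not expect a genuine obstacle: the only points needing care are confirming that the SVD ordering is preserved under $\lambda\mapsto\lambda^p$ (which holds precisely because $p>0$, so that $\lambda_1^p$ and $\lambda_d^p$ really are the largest and smallest eigenvalues of $\mathbf{A}^p$), and being explicit that the symmetry of $\mathbf{A}^p$ and of $\mathbf{x}\mathbf{x}^T$ justifies dropping the transposes when converting the vectorized inner products into traces.
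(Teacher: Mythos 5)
Your proof is correct and follows essentially the same route as the paper's: the vectorization--trace identity $\vv(\mathbf{M})^T\vv(\mathbf{N})=\texttt{Trace}(\mathbf{M}^T\mathbf{N})$, reduction of $C(\mathbf{x})$ to the quadratic form $\mathbf{x}^T\mathbf{A}^p\mathbf{x}/\rho(\mathbf{A}^p)$, and Rayleigh-quotient bounds $\lambda_d^p\|\mathbf{x}\|^2 \leq \mathbf{x}^T\mathbf{A}^p\mathbf{x} \leq \lambda_1^p\|\mathbf{x}\|^2$ in the eigenbasis. The only cosmetic difference is that for property (2) you collapse each contribution to a quadratic form before summing, whereas the paper sums the vectorized outer products into $\vv(\mathbf{A})$ first; you also make explicit some details the paper leaves implicit (cyclicity of the trace, preservation of eigenvalue ordering under $\lambda\mapsto\lambda^p$ for $p>0$).
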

\begin{proof}
The proof is left in the supplementary material.
\end{proof}

\begin{proposition}
\label{prop:var_bounds}
The variance $\sigma^2$ of the contributions $C(\mathbf{x})$ satisfies
\begin{equation}
\label{eq:bounds}
\sigma^2 \leq (M-\mu)(\mu-m) \leq \frac{(M-m)^2}{4} \leq \frac{r_{\max}^2\lambda_1^{2p}}{4\rho(\mathbf{A}^p)^2}, 
\end{equation}
where $M$ and $m$ are the maximum and minimum values defined above and $\mu$ is the mean of $C(\mathbf{x})$ given by $\sum_{\mathbf{x} \in {\cal X}}C(\mathbf{x}) / n$ where $n$ is the cardinality of ${\cal X}$. All of the above quantities can be computed from the spectrum of the matrix $\mathbf{A}$.
\end{proposition}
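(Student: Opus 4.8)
The plan is to verify the chain of three inequalities in~\eqref{eq:bounds} one link at a time, treating the contributions $C(\mathbf{x})$ as the values of a random variable under the empirical (uniform) distribution on ${\cal X}$, so that $\mu = \frac{1}{n}\sum_{\mathbf{x}\in{\cal X}} C(\mathbf{x})$ and $\sigma^2 = \frac{1}{n}\sum_{\mathbf{x}\in{\cal X}}(C(\mathbf{x})-\mu)^2$. The first inequality $\sigma^2 \leq (M-\mu)(\mu-m)$ is a variance bound of Bhatia--Davis type. The key observation is that each quantity $(M-C(\mathbf{x}))(C(\mathbf{x})-m)$ is nonnegative, since $m \leq C(\mathbf{x}) \leq M$ by the very definitions of $M$ and $m$. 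Averaging this pointwise nonnegativity over ${\cal X}$ gives $\frac{1}{n}\sum_{\mathbf{x}}(M-C(\mathbf{x}))(C(\mathbf{x})-m) \geq 0$; expanding the product and using linearity yields $\frac{1}{n}\sum_{\mathbf{x}} C(\mathbf{x})^2 \leq (M+m)\mu - Mm$. Subtracting $\mu^2$ from both sides and recognizing that the right-hand side equals $(M-\mu)(\mu-m)$ establishes the first link.

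For the second inequality I would set $a = M-\mu$ and $b = \mu-m$, both nonnegative because the mean lies between the minimum and maximum, and note that $a+b = M-m$. The arithmetic--geometric mean inequality then gives $ab \leq \left(\frac{a+b}{2}\right)^2 = \frac{(M-m)^2}{4}$, which is precisely $(M-\mu)(\mu-m) \leq \frac{(M-m)^2}{4}$.

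The third inequality is where I would invoke Proposition~\ref{prop:four_props}. Parts~3 and~4 there give $M \leq r_{\max}\lambda_1^{p}/\rho(\mathbf{A}^p)$ and $m \geq r_{\min}\lambda_d^{p}/\rho(\mathbf{A}^p)$. Since $\mathbf{A}$ is a sum of outer products it is positive semidefinite, so $\lambda_d \geq 0$ and hence $\lambda_d^{p}\geq 0$; combined with $r_{\min}\geq 0$ this forces $m \geq 0$. Therefore $M-m \leq M \leq r_{\max}\lambda_1^{p}/\rho(\mathbf{A}^p)$, and squaring and dividing by $4$ delivers the final bound $\frac{(M-m)^2}{4} \leq \frac{r_{\max}^2\lambda_1^{2p}}{4\rho(\mathbf{A}^p)^2}$. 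The closing remark that every quantity is computable from the spectrum of $\mathbf{A}$ then follows directly, since Proposition~\ref{prop:four_props} already expresses $\rho(\mathbf{A}^p)$ and the relevant radii and eigenvalues through the $\lambda_i$.

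I do not anticipate a genuine obstacle: once the right tool is identified, each link is elementary. The only step requiring a moment of care is the first inequality, where one must recognize the Popoviciu/Bhatia--Davis device of averaging the pointwise nonnegative quantity $(M-C(\mathbf{x}))(C(\mathbf{x})-m)$ rather than attempting to bound the variance directly; the remaining two links reduce to AM--GM and a direct substitution of the bounds from Proposition~\ref{prop:four_props}.
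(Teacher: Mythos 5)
Your proof is correct and takes essentially the same route as the paper: the paper's proof simply cites Popoviciu's inequality and the Bhatia--Davis inequality for the first two links and notes that the final bound follows ``by setting $m=0$,'' which is exactly the chain you establish. The only difference is that you supply the standard proofs of those classical inequalities inline and explicitly justify $m \geq 0$ via the positive semidefiniteness of $\mathbf{A}$, which the paper leaves implicit.
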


\begin{proof}
The proof can be obtained by a straightforward application of Popoviciu's inequality on variances~\cite{popoviciu1935equations} and a tighter variant by Bhatia and Davis~\cite{bhatia2000better}. The last inequality is obtained by setting $m=0$.
\end{proof}

The above shows that smaller values $p$ reduce an upper-bound on the variance of the contributions thereby equalizing their contributions.
The upper bound is a monotonic function of the exponent $p$ and is minimized when $p=0$ reducing all the spectrum to an identity matrix. This corresponds to whitening of the matrix $\mathbf{A}$.
However, complete whitening often leads to poor results while intermediate values such as $p=1/2$ can be significantly better than $p=1$~\cite{me_tensor_tech_rep,me_tensor,lin2017improved,secord_peihua_li}.
In the experiments section we evaluate these bounds on deep features from real data.

\begin{proposition}
\label{prop:linear_span} 
For exponents $0 < p < 1$, the matrix power $\mathbf{A}^p$ may not lie in the linear span of the outer-products of the features $\mathbf{x} \in {\cal X}$.
\end{proposition}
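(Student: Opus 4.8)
The plan is to exploit the representational gap between a reweighted aggregation and a matrix power. Any democratic (or $\gamma$-democratic) aggregate has the form $\sum_{\mathbf{x}\in{\cal X}}\alpha(\mathbf{x})\,\mathbf{x}\mathbf{x}^T$ and therefore always lies in the linear span $\mathcal{S}:=\operatorname{span}\{\mathbf{x}\mathbf{x}^T:\mathbf{x}\in{\cal X}\}$ inside the symmetric matrices. Since the statement only asserts that $\mathbf{A}^p$ \emph{may} fall outside this span, it suffices to exhibit a single configuration ${\cal X}$ for which $\mathbf{A}^p\notin\mathcal{S}$. The guiding principle in choosing ${\cal X}$ is that $\mathcal{S}$ must be a \emph{proper} subspace of the symmetric matrices and that $\mathbf{A}$ must have rank at least two: for a rank-one $\mathbf{A}=\mathbf{x}\mathbf{x}^T$ every power satisfies $\mathbf{A}^p=\|\mathbf{x}\|^{2(p-1)}\mathbf{A}\in\mathcal{S}$, so the construction has to avoid this degenerate case.

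Concretely, I would take $d=2$ and ${\cal X}=\{(1,0)^T,(1,1)^T\}$, giving $\mathbf{A}=\begin{pmatrix}2&1\\1&1\end{pmatrix}$. From the two spanning matrices one checks directly that $\mathcal{S}=\{M\in\mathrm{Sym}_2 : M_{12}=M_{22}\}$, a two-dimensional proper subspace. Because $\mathbf{A}$ is $2\times2$, Cayley--Hamilton lets me write every power as an affine combination $\mathbf{A}^p=\alpha_p\mathbf{A}+\beta_p\mathbf{I}$, where $\alpha_p=(\lambda_1^p-\lambda_2^p)/(\lambda_1-\lambda_2)$ and $\beta_p=(\lambda_1\lambda_2^p-\lambda_2\lambda_1^p)/(\lambda_1-\lambda_2)$, with $\lambda_{1,2}=(3\pm\sqrt5)/2$ the distinct positive eigenvalues of $\mathbf{A}$.

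Reading off entries from $A_{12}=A_{22}=1$ and $I_{12}=0,\;I_{22}=1$ gives $(\mathbf{A}^p)_{12}=\alpha_p$ and $(\mathbf{A}^p)_{22}=\alpha_p+\beta_p$, so $\mathbf{A}^p\in\mathcal{S}$ precisely when $\beta_p=0$. A short computation shows $\beta_p=0$ forces $(\lambda_1/\lambda_2)^{1-p}=1$, which for distinct eigenvalues holds only at $p=1$; hence $\mathbf{A}^p\notin\mathcal{S}$ for every $0<p<1$. As an explicit witness, at $p=1/2$ one obtains $\mathbf{A}^{1/2}=\tfrac{1}{\sqrt5}\begin{pmatrix}3&1\\1&2\end{pmatrix}$, whose entries satisfy $M_{12}=1/\sqrt5\neq 2/\sqrt5=M_{22}$. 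This proves the proposition and, read alongside Proposition~\ref{prop:sketch}, shows that matrix power normalization cannot in general be reproduced by any reweighting of the outer products. The only genuine subtlety—the main obstacle—is the choice of ${\cal X}$, which must simultaneously keep $\mathcal{S}$ proper and $\operatorname{rank}\mathbf{A}\geq 2$, since the full-span or rank-one configurations would make the claim false; once such a configuration is fixed, the remaining verification via Cayley--Hamilton is routine.
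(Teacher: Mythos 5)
Your proposal is correct, and it is built on exactly the same counterexample as the paper: $\mathcal{X}=\{(1,0)^T,(1,1)^T\}$, $\mathbf{A}=\left(\begin{smallmatrix}2&1\\1&1\end{smallmatrix}\right)$, together with the same key observation that any linear combination of $\mathbf{x}_1\mathbf{x}_1^T$ and $\mathbf{x}_2\mathbf{x}_2^T$ has all entries equal except possibly the top-left one. Where you go beyond the paper is in the scope and rigor of the verification: the paper only exhibits the numerical square root $\mathbf{A}^{1/2}\approx\left(\begin{smallmatrix}1.3416&0.4472\\0.4472&0.8944\end{smallmatrix}\right)$ and concludes for $p=1/2$, whereas your Cayley--Hamilton decomposition $\mathbf{A}^p=\alpha_p\mathbf{A}+\beta_p\mathbf{I}$ reduces membership in the span to $\beta_p=0$, which you show fails for \emph{every} $0<p<1$ since the eigenvalues $(3\pm\sqrt5)/2$ are distinct. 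This matches the proposition's stated range of exponents exactly and replaces decimal approximations with exact arithmetic (your closed form $\mathbf{A}^{1/2}=\tfrac{1}{\sqrt5}\left(\begin{smallmatrix}3&1\\1&2\end{smallmatrix}\right)$ squares to $\mathbf{A}$, confirming the paper's numbers). Your remark on the rank-one degenerate case, where $\mathbf{A}^p=\|\mathbf{x}\|^{2(p-1)}\mathbf{A}$ always lies in the span, also usefully explains why the counterexample must have rank at least two; the paper instead notes the complementary sufficient condition that orthogonal features keep $\mathbf{A}^p$ in the span, which your writeup omits but which is not needed for the proposition itself.
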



The proof of Proposition~\ref{prop:linear_span} is left in the supplementary material. A consequence of this is that the matrix power cannot be easily computed in the low-dimensional embedding space of outer-products encoding such as Tensor Sketch. It does however lie in the linear span of the outer-products of the eigenvectors. However, computing eigenvectors can be significantly slower than computing weighted aggregates. 
We describe the computation and memory trade-offs between computing the matrix powers and democratic pooling in Section~\ref{sec:exp_time}.

\section{Experiments}
\label{sec:experiments}
We analyze the behavior of matrix power normalization and $\gamma$-democratic pooling empirically on several fine-grained and texture recognition datasets. The general experiment setting and the datasets are described in Section~\ref{sec:exp_setup}. 
We validate the theoretical bounds on the feature contributions with real data in Section~\ref{sec:exp_bounds}. 
We compare our models against sum-pooling baseline, matrix power normalization, and other state-of-the-art methods in Sections~\ref{sec:eval} and \ref{sec:ts}.
Finally, we include a discussion on runtime and memory consumption for various approaches and a technique to perform end-to-end fine-tuning in Section~\ref{sec:exp_time}.


\subsection{Experimental setup}
\label{sec:exp_setup}
\textbf{Datasets.} We experiment on Caltech-UCSD Birds~\cite{WelinderEtal2010}, Stanford Cars~\cite{KrauseStarkDengFei-Fei_3DRR2013} and FGVC Aircrafts~\cite{maji13fine-grained} datasets. 
Birds dataset contains 11,788 images which contain over 200 bird species. Stanford Cars dataset consists of 16.185 images across 196 categories and FGVC Aircrafts provides 10,000 images of 100 categories.
For each dataset, we use the train and test splits provided by the benchmarks and only the corresponding category labels are used during training phase. 
In addition to the above fine-grained classification tasks, we also analyze the performance of various approaches on the following datasets: Describable Texture Dataset (DTD)~\cite{cimpoi2014describing}, Flickr Material Dataset (FMD)~\cite{Sharan09} and MIT indoor scene dataset~\cite{quattoni_mitindoors}.
DTD consists of 5,640 images across 47 texture attributes. We report results averaged over the 10 splits provided by the dataset. FMD provides 1000 images from 10 different material categories. We randomly split half of images for training and the rest for testing for each category and report results across multiple splits. 
The MIT indoor scene dataset contains 67 indoor scene categories, each of which includes 80 images for training and 20 for testing.

\paragraph{\textbf{Features.} }
We aggregate the second-order features with $\gamma$-democratic pooling and matrix power normalization using VGG-16~\cite{simonyan14very} and ResNet101~\cite{He_2016_CVPR} networks. 
We follow the work~\cite{bilinear_finegrained} and resize input images to $448 \times 448$ and aggregate the last convolutional layer features after ReLU activations. 
For the VGG-16 network architecture, this results in  feature maps of size $28\times 28 \times 512$ (before aggregation), while for the ResNet101 architecture this results in maps of size $14 \times 14 \times 2048$.
For $\gamma$-democratic pooling, we run the modified Sinkhorn algorithm for 10 iterations with the power exponent $\tau=0.5$. 
Fully democratic pooling~\cite{murray2017interferences} and sum pooling can be implemented by setting $\gamma=0$ and $\gamma=1$, respectively. 
The aggregated features are followed by element-wise signed square-root and $\ell_2$ normalization. 
For fine-grained recognition datasets, we aggregate the VGG-16 features fine-tuned with vanilla BCNN models, while the ImageNet pretrained networks without fine-tuning are used for texture and scene datasets.

\subsection{The distribution of the spectrum and feature contributions}
\label{sec:exp_bounds}
In this section, we analyze how democratic pooling and matrix normalization effect the spectrum (set of eigenvalues) of the aggregated representation, as well as how the contributions of individual features are distributed as a function of $\gamma$ for the democratic pooling and $p$ of the matrix power normalization.

We randomly sampled 50 images from CUB and MIT indoor datasets each and plotted the spectrum (normalized to unit length) and the feature vector contributions $C(\mathbf{x})$ (Eq.~\eqref{eq:contributions}) in Figure~\ref{fig:eigs}. 
In this experiment, we use the matrix power $p=0.5$ and $\gamma=0.5$.
Figure~\ref{fig:eigs}(a) shows that the square root yields a flatter spectrum in comparison to the sum aggregation.
Democratic aggregation distributes the energy away from the top eigenvalues but has considerably sharper spectrum in comparison to the square root. 
The $\gamma$-democratic pooling interpolates between sum and fully democratic pooling.
%

Figure~\ref{fig:eigs}(b) shows the contributions of each feature $\mathbf{x}$ to the aggregate for different pooling techniques (Eq.~\eqref{eq:contributions}).
The contributions are more evenly distributed for the matrix square root in comparison to sum pooling.
Democratic pooling flattens the individual contributions the most -- we note that it is explicitly designed to have this effect. These two plots show that democratic aggregation and power normalization both achieve equalization of feature contributions.

\begin{figure*}[t]
\begin{center}
\begin{tabular}{cccc}
\includegraphics[height=0.24\linewidth]{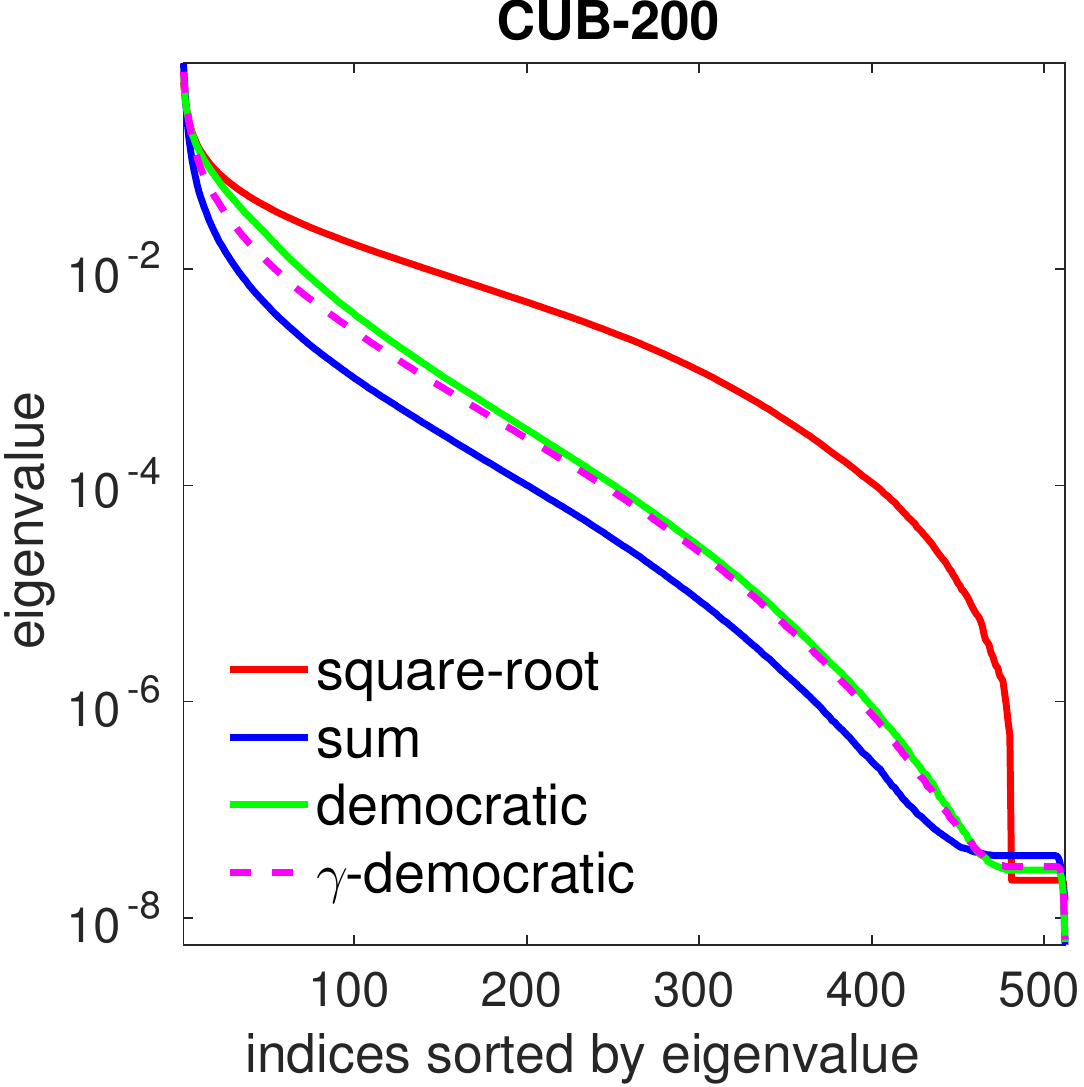} &
\includegraphics[height=0.24\linewidth]{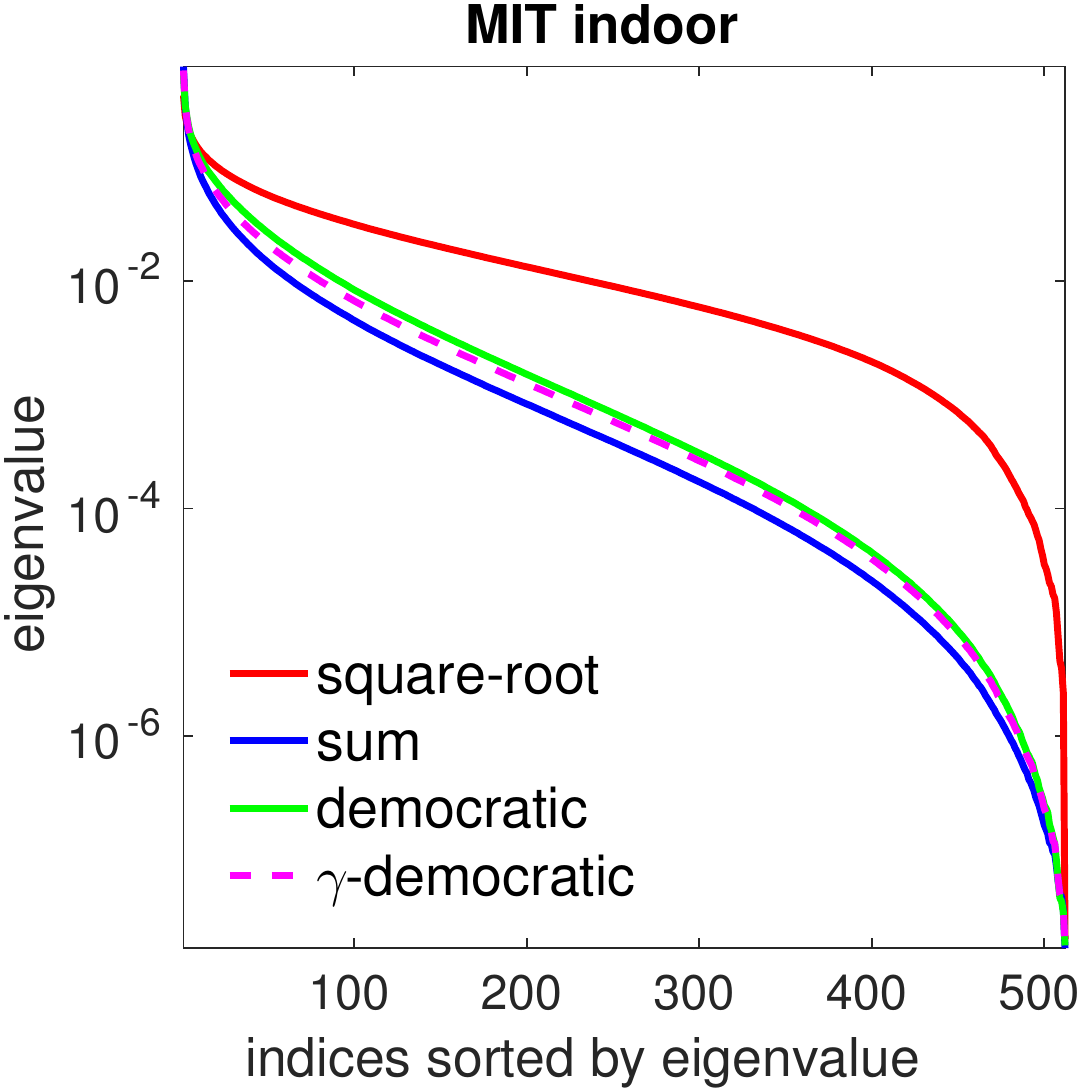} &
\includegraphics[height=0.24\linewidth]{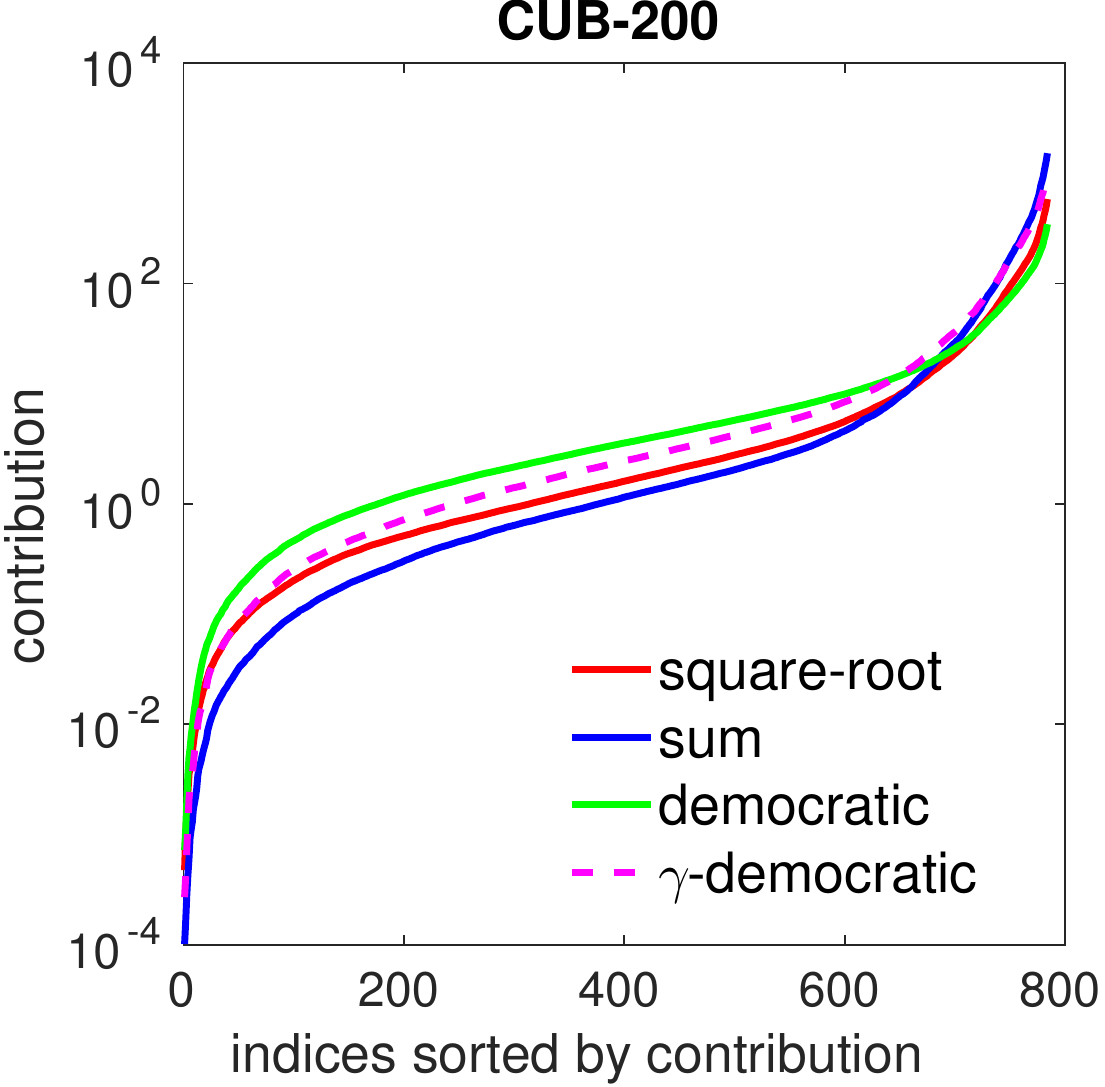} & 
\includegraphics[height=0.24\linewidth]{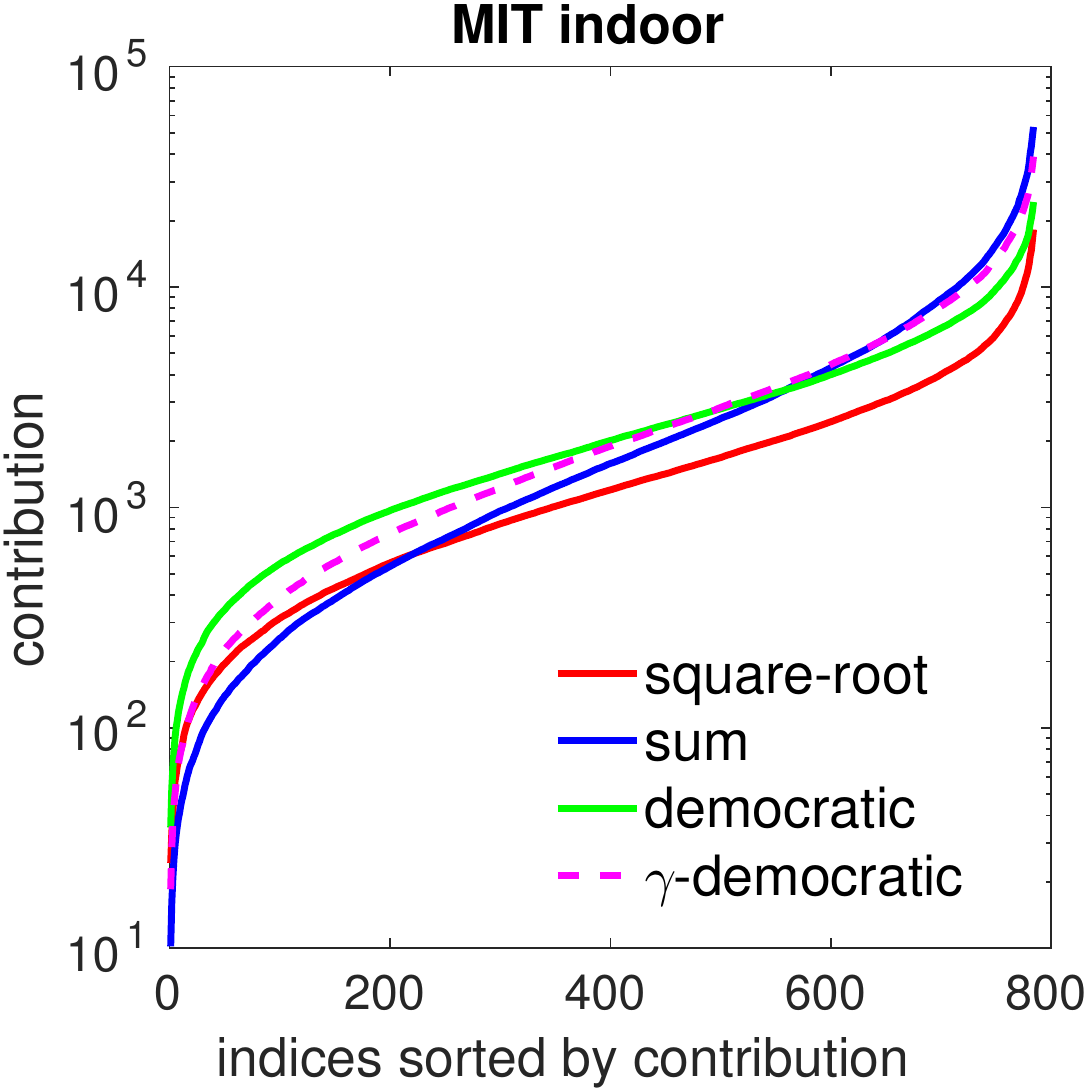} \\
\multicolumn{2}{c}{(a) spectrum (eigenvalues)} & \multicolumn{2}{c}{(b) contributions $C(\mathbf{x})$} \\
\end{tabular}
\end{center}
\caption{\label{fig:eigs} (a) The spectrum (eigenvalues) for various feature aggregators on CUB-200 and MIT indoor datasets. (b) The individual feature vector contributions $C(\mathbf{x})$.}
\end{figure*}

\begin{figure*}[t]
\begin{center}
\begin{tabular}{cccc}
\includegraphics[height=0.24\linewidth]{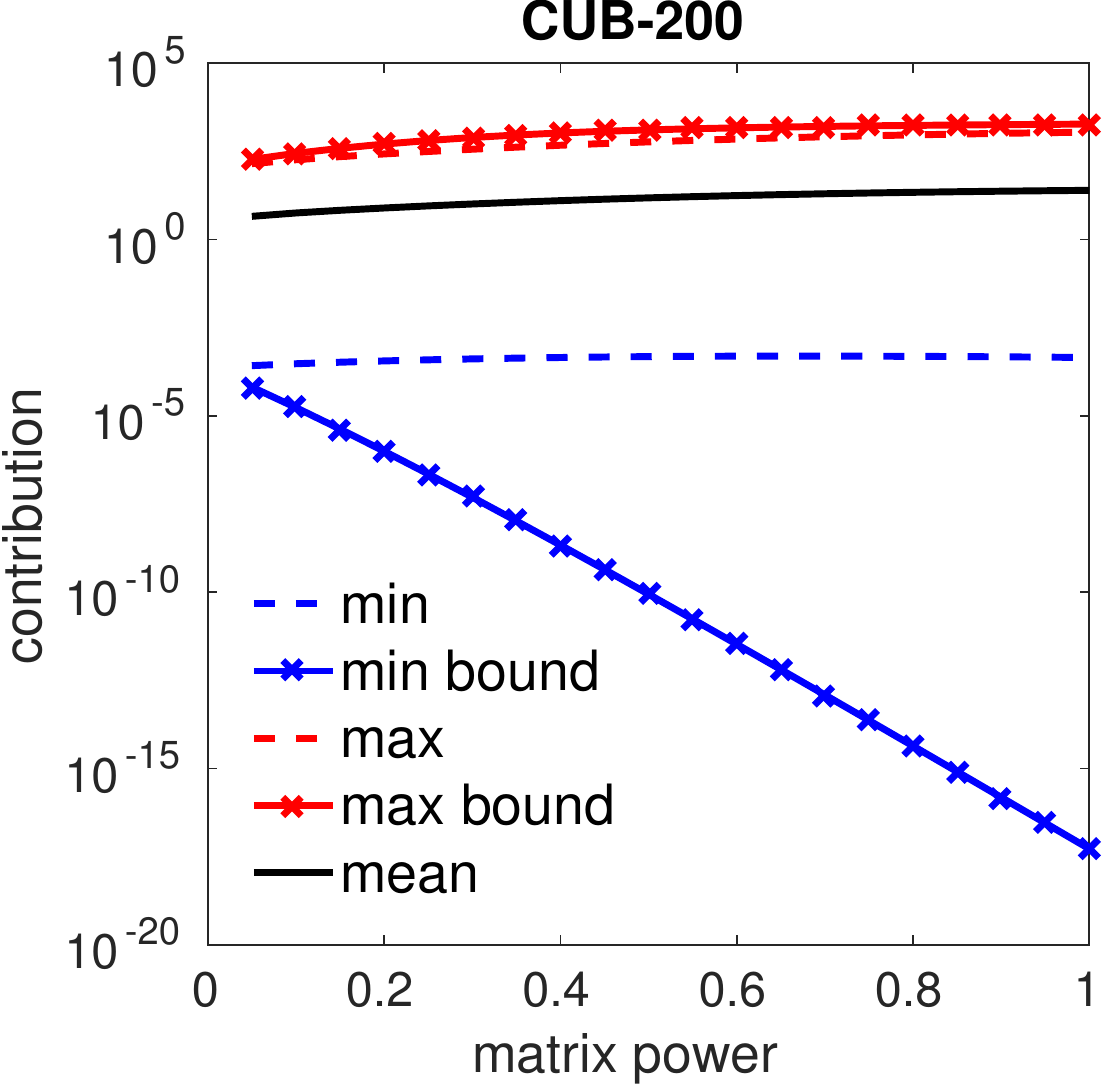} &
\includegraphics[height=0.24\linewidth]{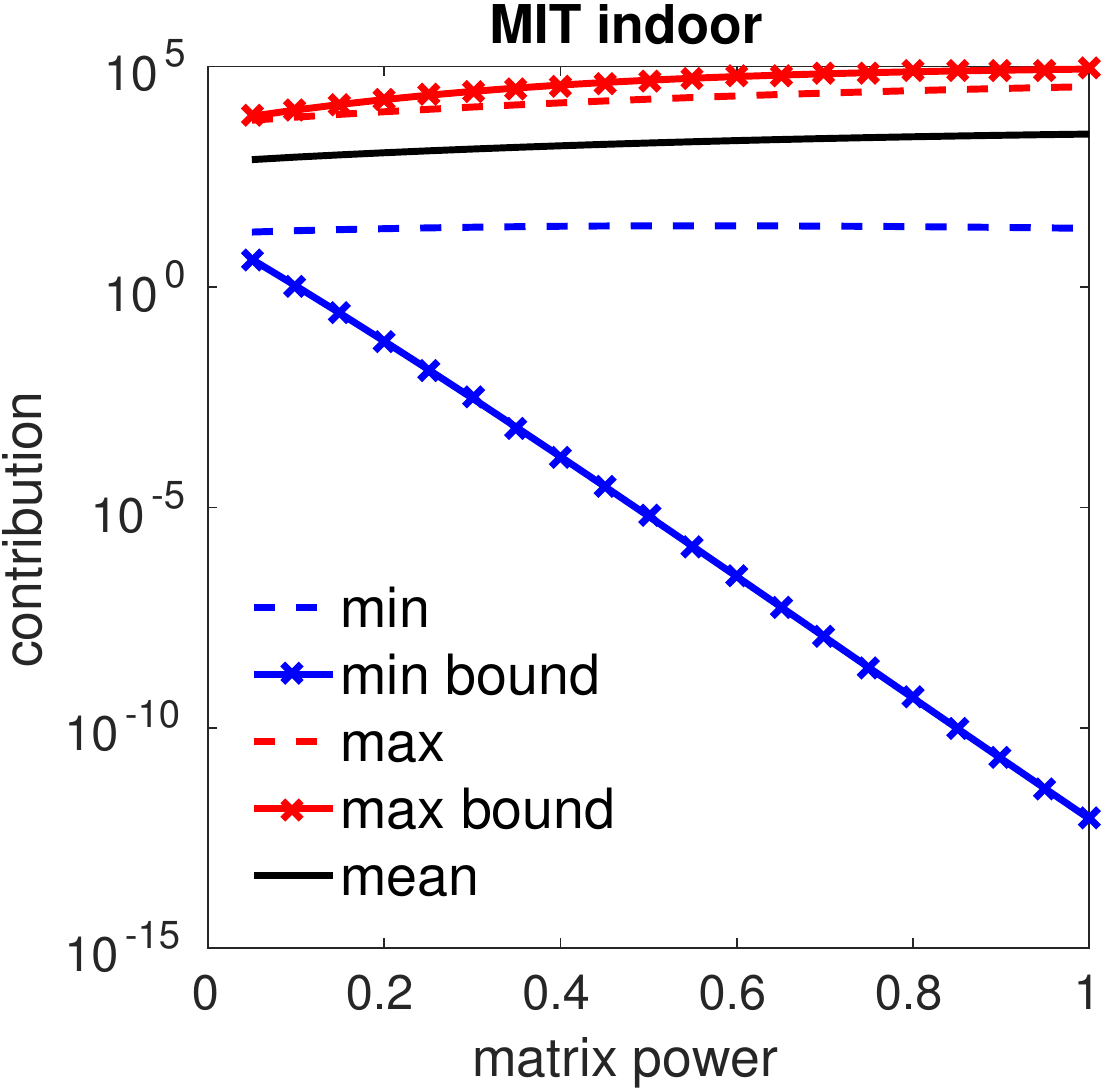} &
\includegraphics[height=0.24\linewidth]{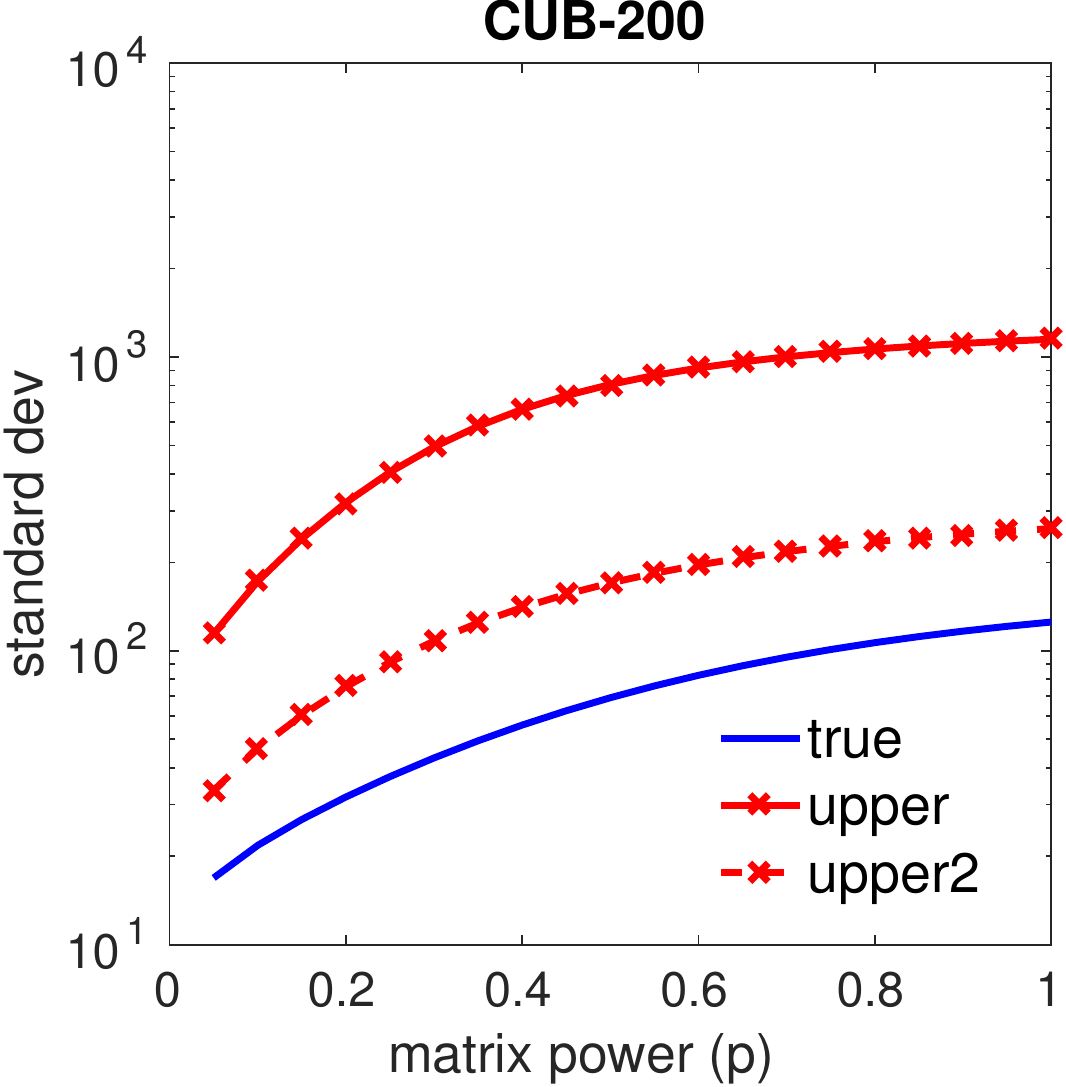} & 
\includegraphics[height=0.24\linewidth]{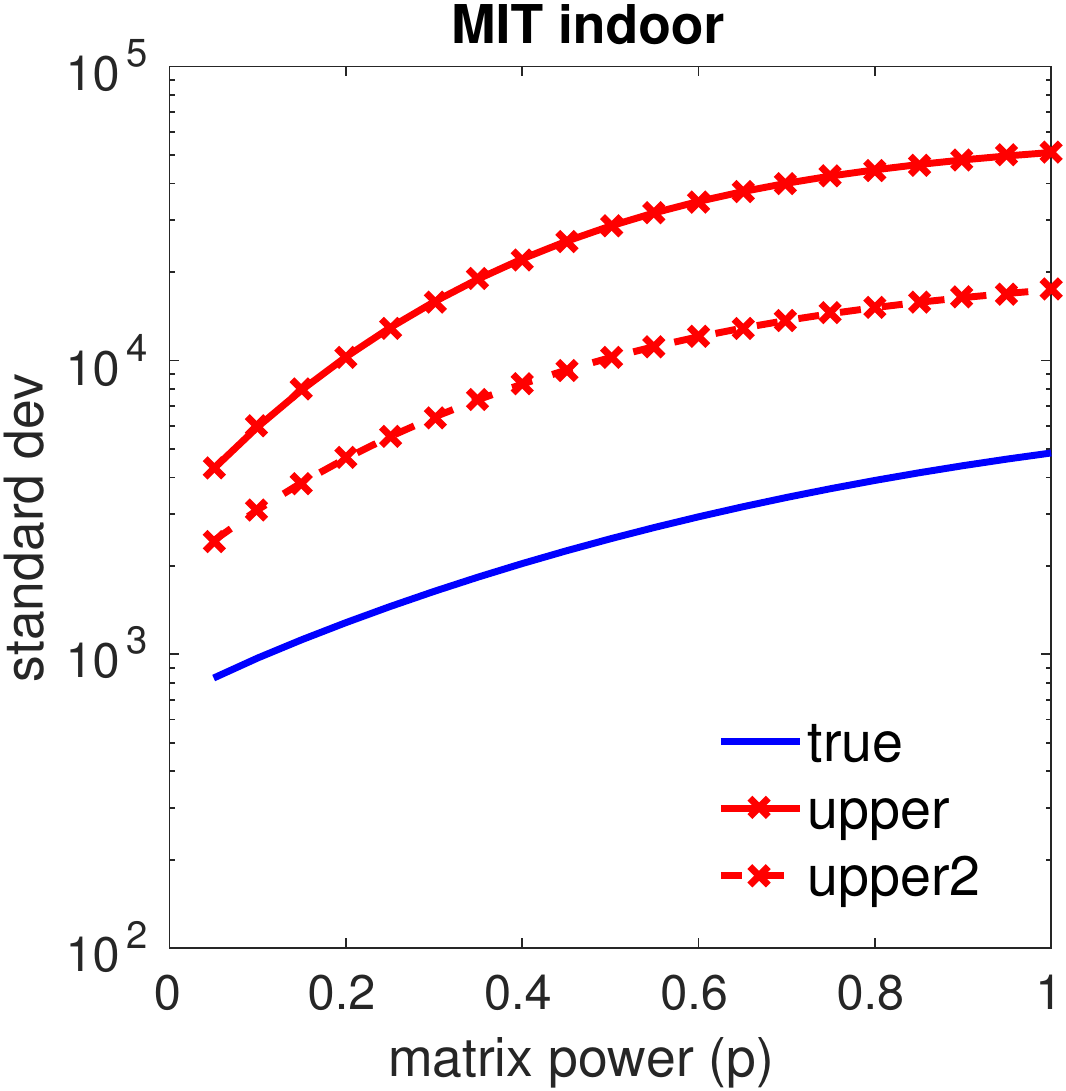} \\
\multicolumn{2}{c}{(a) bounds on contribution} & \multicolumn{2}{c}{(b) bounds on variance} \\
\end{tabular}
\end{center}
\caption{\label{fig:bounds} (a) The upper (red solid) and lower bounds (blue solid) on the contributions to the set similarity versus the exponent of matrix power normalization on Birds and MIT indoor datasets. Maximum and minimum values are shown in dashed lines and the the mean is shown in black solid lines. (b) The upper bounds to the variance of feature contributions $C(\mathbf{x})$. }
\end{figure*}

Figure~\ref{fig:bounds} shows the variances of the contributions $C(\mathbf{x})$ to the aggregation $\hat{\mathbf{A}}^p$ using the VGG-16 features for different values of the exponent $p$.
Figure~\ref{fig:bounds}(a) shows the true minimum, maximum, mean as well as the bounds of these quantities expressed in Proposition~\ref{prop:four_props}. 
The upper bound on the maximum contribution, \ie, $r_{\max} \lambda_1^{p}/\rho(\mathbf{A}^p)$, is tight on both datasets, as can be seen in the overlapping red lines, while the lower bound is significantly less tight.


Figure~\ref{fig:bounds}(b) shows the true deviation and two different upper bounds on the variance of the contributions as expressed in Proposition~\ref{prop:var_bounds} and Eq.~\eqref{eq:bounds}.
The tighter bound shown by the dashed red line corresponds to the version with the mean $\mu$ in Eq.~\eqref{eq:bounds}.
The plot shows that the matrix power normalization implicitly reduces the variance in feature contributions similar to equalizing the feature vector contributions $C(\mathbf{x})$ in democratic aggregation.
These plots are averaged over 50 examples from the CUB-200 and MIT indoor datasets. 


\subsection{Effect of $\gamma$ on democratic pooling}
\label{sec:eval}
Table~\ref{tab:acc_table} shows the performance as a function of $\gamma$ for the $\gamma$-democratic pooling and $p$ for the matrix normalization on the VGG-16 network. 
For DTD dataset, we report results on the first split. 
For FMD dataset, we randomly sample half of the data in each category for training and use the rest for testing. 
We use the standard training and testing splits on remaining datasets. 
We augment the training set by flipping its images and train k one-vs-all linear SVM classifiers with hyperparameter $C=1$. 
At the test time, we average predictions from an image and its flipped copy. 
Optimal $\gamma$ and the matrix power $p$ are also reported.

The results on sum pooling correspond to the symmetric BCNN models~\cite{lin2017improved}. 
Fully democratic pooling ($\gamma$=0) improves the performance over sum pooling by 0.7-1\%. 
However, equalizing feature contributions hurts performance on Stanford Cars and FMD dataset. 
Table~\ref{tab:acc_table} shows that reducing the contributions by adjusting $0 < \gamma < 1$ helps outperform sum pooling and fully democratic pooling.

Matrix power normalization outperforms $\gamma$-democratic pooling by 0.2-1\%. 
However, computing the matrix powers on covariance matrices is computationally expensive compared to our democratic aggregation. We discuss these tradeoffs in the Section~\ref{sec:exp_time}.

\begin{table}[t]
\setlength{\tabcolsep}{10pt}
\renewcommand{\arraystretch}{1.1}
\centering
\begin{tabular}{l|c|c|c|c}
\multirow{3}{*}{Dataset} & \multicolumn{3}{|c|}{$\gamma$-democratic} & \multirow{3}{*}{$\mathbf{A}^{p}$} \\
\cline{2-4}
 & Democratic & Optimal& Sum & \\
 & $\gamma$=0 & $\gamma$  & $\gamma=1$ & \\
\hline
Caltech UCSD Birds & 84.7 & 84.9 (0.5) & 84.0 & 85.9 (0.3)\\
Stanford Cars & 89.7 & 90.8 (0.5) & 90.6 & 91.7 (0.5)\\
FGVC Aircrafts & 86.7 & 86.7 (0.0) & 85.7 & 87.6 (0.3)\\
\hline
DTD & 72.2 & 72.3 (0.3) & 71.2 & 72.9 (0.6)\\
FMD & 82.8 & 84.8 (0.8) & 84.6 & 85.0 (0.7)\\
MIT indoor & 79.6 & 80.4 (0.3) & 79.5 & 80.9 (0.6)\\
\end{tabular} 
\caption{\label{tab:acc_table} The accuracy of aggregating second-order features w.r.t. various aggregators using fine-tuned VGG-16 on fine-grained recognition (top) and using ImageNet pretrained VGG-16 on other (bottom) datasets. From left to right, we vary $\gamma$ values and compare democratic pooling, $\gamma$-democratic pooling and average pooling with the matrix power aggregation. The optimal values of $\gamma$ and $p$ are indicated in parentheses.}
\end{table}

\subsection{Democratic pooling with Tensor Sketching}
\label{sec:ts}
One of the main advantages of the democratic pooling approaches over matrix power normalization techniques is that the embeddings can be computed in a low-dimensional space using tensor sketching.
To demonstrate this advantage, we compute the second-order democratic pooling combined with tensor sketching on 2048 dimensional ResNet-101 features. 
Direct construction of second-order features yields $\sim$4M dimensional features which are impractical to manipulate on GPU/CPU. 
Therefore, we apply the Tensor Sketch~\cite{Pham_sketch} to approximate the outer product using 8192 dimensional features, which is far lower than 2048$^2$ of the full outer product.
The features are aggregated using $\gamma$-democratic approach with $\gamma=0.5$.
We compare our method to the state of the art on MIT indoor, FMD and DTD datasets. 
We report the mean accuracy. For DTD and FMD, we also indicate the standard deviation over 10 splits.

\paragraph{\textbf{Results on MIT indoor.}} 
Table~\ref{tab:mit67} reports the accuracy 
on MIT indoor. The baseline model approximating second-order features with tensor sketch followed by sum pooling achieves 82.8\% accuracy. 
With democratic pooling, our model achieves state-of-the-art accuracy of  84.3\% which is 1.5\% more than the baseline. 
Moreover, Table~\ref{tab:acc_table} shows that we outperform the matrix power normalization using VGG-16 network by 3.4\%. Note that (i) matrix power normalization is impractical for ResNet101 features, (ii) it cannot be computed by sketching due to Proposition~\ref{prop:linear_span}. 
We also outperform FASON~\cite{Dai_2017_CVPR} by 2.6\%. FASON fuses the first- and second-order features from ${conv4\_4}$ and $conv5\_4$ layers of the VGG-19 networks given 448$\times$448 image size and scores 81.7\% accuracy. Recent work on Spectral Features~\cite{khan2017scene} achieves the same accuracy as our best model with democratic pooling. However, approach \cite{khan2017scene} uses more data augmentations (rotation, shifts, \etc) during training and pretrains the VGG-19 network on the large-scale Places205 dataset.
In contrast, our networks are pretrained on ImageNet which arguably has a larger domain shift from the MIT indoor dataset than Places205.

\begin{table}[h]
\setlength{\tabcolsep}{10pt}
\centering
\begin{tabular}{l l|c}
Method && accuracy\\
\hline
{\em Places-205}\kern-0.6em&\cite{places_mit} & 80.9 \\
{\em Deep Filter Banks}\kern-0.6em &\cite{cimpoi2015deep} & 81.0 \\
{\em Spectral Features}\kern-0.6em&\cite{khan2017scene} & 84.3 \\
{\em FASON}\kern-0.6em & \cite{Dai_2017_CVPR}  & 81.7 \\
\hline
{\em ResNet101 + TS + sum pooling}  & \textit{(baseline)} & 82.8\\
{\em ResNet101 + TS + $\gamma$-democratic}  & \textit{(ours)} & \textbf{84.3}\\
\end{tabular}
\caption{Evaluations and comparisons to the state of the art on MIT indoor dataset.}
\label{tab:mit67}
\end{table}

\paragraph{\textbf{Results on FMD. }} 
Table~\ref{tab:fmd} compares the accuracy on FMD dataset. Recent work on Deep filter banks~\cite{cimpoi2015deep}, denoted as FV+FC+CNN, which combines fully-connected CNN features and Fisher Vector approach, scores 82.1\% accuracy. In contrast to several methods, FASON uses single-scale input images (224$\times$224) and also scores 82.1\% accuracy. Our second-order democratic pooling outperforms FASON by 0.7\% given the same image size. For 448$\times$448 image size, our model scores 84.3\% and outperforms other state-of-the-art approaches. 

\begin{table}[h]
\setlength{\tabcolsep}{10pt}
\centering
\begin{tabular}{l l|c|c}
Method  && input size& accuracy \\
\hline
{\em IFV+DeCAF}\kern-0.6em & \cite{cimpoi2014describing} & ms & 65.5 ${\pm}$  1.3  \\
{\em FV+FC+CNN}\kern-0.6em & \cite{cimpoi2015deep}       & ms & 82.2 ${\pm}$ 1.4 \\
{\em LFV}\kern-0.6em & \cite{Song_2017_ICCV} & ms & 82.1 ${\pm}$ 1.9  \\
{\em SMO Task}\kern-0.6em & \cite{zhang2016integrating}  & - & 82.3 ${\pm}$ 1.7 \\
{\em FASON}\kern-0.6em & \cite{Dai_2017_CVPR}  & 224 & 82.1 ${\pm}$ 1.9 \\
\hline
{\em ResNet101 + TS + sum pooling}  & \textit{(baseline)} & 448 & 83.7 ${\pm}$ 1.3\\
{\em ResNet101 + TS + $\gamma$-democratic}  & \textit{(ours)} & 448 & \textbf{84.3} ${\pm}$ 1.5\\
{\em ResNet101 + TS + $\gamma$-democratic}  & \textit{(ours)} & 224 & 82.8 ${\pm}$ 2.5\\
\end{tabular}
\caption{Evaluations and comparisons to the state of the art on the FMD dataset. The middle column indicates the image size used by each method ({\em ms} indicates multiple scales while hyphen denotes an unknown size).}
\label{tab:fmd}
\end{table}

\paragraph{\textbf{Results on DTD.}} 
Table~\ref{tab:dtd} presents our results and comparisons on DTD dataset. Deep filter banks~\cite{cimpoi2015deep}, denoted as FV+FC+CNN, reports 75.5\% accuracy. 
Combined second-order features and tensor sketching outperforms Deep filter banks by 0.3\%. With second-order democratic pooling and 448$\times$448 size images, our model achieves 76.2\% accuracy and outperforms FV+FC+CNN 0.7\%. Note that FV+FC+CNN exploits several scales of image sizes.

\begin{table}[h]
\setlength{\tabcolsep}{10pt}
\centering
\begin{tabular}{l l|c|c}
Method  && input size & accuracy \\
\hline
{\em LFV}\kern-0.6em & \cite{Song_2017_ICCV} & ms &73.8 ${\pm}$ 1.0 \\
{\em FV+FC+CNN}\kern-0.6em & \cite{cimpoi2015deep} &  ms & 75.5 ${\pm}$ 0.8\\
{\em FASON}\kern-0.6em & \cite{Dai_2017_CVPR} & 224 & 72.9 ${\pm}$ 0.7\\
\hline
{\em ResNet101 + TS + sum pooling}  & \textit{(baseline)} & 448 & 75.8 ${\pm}$ 0.7\\
{\em ResNet101 + TS + $\gamma$-democratic}  & \textit{(ours)} & 448 & \textbf{76.2} ${\pm}$ 0.7\\
{\em ResNet101 + TS + $\gamma$-democratic}  & \textit{(ours)} & 224 & 73.0 ${\pm}$ 0.6\\
\end{tabular}
\caption{Evaluations and comparisons to the state of the art on the DTD dataset. The middle column indicates the image size used by each method ({\em ms} indicates multiple scales while hyphen denotes an unknown size).}
\label{tab:dtd}
\end{table}

\subsection{Discussion}
\label{sec:exp_time}
While matrix power normalization achieves marginally better performance, it requires SVD which is computationally expensive and not GPU friendly \eg, the CUDA BLAS cannot perform SVD for large matrices. Even in the case of matrix square root which can be approximated via Newton's iterations~\cite{lin2017improved}, the iterations involve matrix-matrix multiplication of $\mathcal{O}(n^3)$ complexity. In contrast, solving democratic pooling via the Sinkhorn algorithm (Algorithm~\ref{alg:sinkhorn}) involves only matrix-vector multiplication which is $\mathcal{O}(n^2)$. Empirically, we find that solving Sinkhorn iterations is an order of magnitude faster than solving the matrix square root on a NVIDIA Titan X GPU. Moreover, the complexity of Sinkhorn iteration depends only on the kernel matrix -- it is independent of the feature vector size. In contrast, the memory required by a covariance matrix grows with $\mathcal{O}(n^2)$ which becomes prohibitive for feature vectors greater than 512 dimensions. Second-order democratic pooling with tensor sketching yields comparable results and reduces the memory usage by two orders of magnitude over the matrix power normalization. 

Although we did not report results using end-to-end training, one can easily obtain the gradients of the Sinkhorn algorithm using automatic differentiation by implementing Algorithm~\ref{alg:sinkhorn} in a library such as PyTorch or Tensorflow. 
Training using gradients from iterative solvers has been performed in
a number of applications (\eg,~\cite{genevay2017learning}
and~\cite{mena2018learning}) which suggests that it is a promising direction.

\section{Conclusions}
\label{sec:conclusions}
We proposed a second-order aggregation method referred to as $\gamma$-democratic pooling that interpolates between sum ($\gamma$=1) and democratic pooling ($\gamma$=0) and outperforms other aggregation approaches on several classification tasks. We demonstrated that our approach enjoys low computational complexity compared to the matrix square root approximations via Newton's iterations. With the use of sketching, our approach is not limited to aggregating small feature vectors which is typically the case for the matrix power normalization.
The source code for the project is available at \url{http://vis-www.cs.umass.edu/o2dp}.


\paragraph{\textbf{Acknowlegements.}} 
We acknowledge support from NSF (\#1617917, \#1749833) and the MassTech Collaborative grant for funding the UMass GPU cluster.

\section*{Supplementary}
In supplementary we provide the proofs for Proposition~\ref{prop:four_props} and ~\ref{prop:linear_span} described in Section~\ref{sec:method} of the paper. In addition, we provide the comparison between aggregating first- and second-order features using democratic pooling in the last section.

\subsection*{Proofs of Proposition~\ref{prop:four_props}}

\begin{enumerate}[itemsep=1.5mm]
\item 
The $\ell_2$ norm of $\vv(\mathbf{A}^p)$ is $\rho(\mathbf{A}^p) = ||\vv(\mathbf{A}^p)|| = \left(\sum_i \lambda_i^{2p}\right)^{1/2}$.
\begin{proof}
We have:
\begin{eqnarray*}
||\vv(\mathbf{A}^p)||^2 &=& \vv(\mathbf{A}^p)^T\vv(\mathbf{A}^p) \\
&=&  \texttt{Trace}((\mathbf{A}^p)^T\mathbf{A}^p) \\
&=& \texttt{Trace}(\mathbf{U}\Lambda^{2p}\mathbf{U}^T) \\
&=& \sum_i \lambda_i^{2p}.
\end{eqnarray*}

Thus the $\ell_2$ norm: $\rho(\mathbf{A}^p) = ||\vv(\mathbf{A}^p) || =  \left(\sum_i \lambda_i^{2p}\right)^{1/2}$
\end{proof}

\item $\sum_{\mathbf{x} \in {\cal X}}C(\mathbf{x}) = \texttt{Trace}(\mathbf{A}^{1+p}/||\mathbf{A}^p||) = \left(\sum_i \lambda_i^{1+p}\right)/\rho(\mathbf{A}^p)$.

\begin{proof}
We have:
\begin{eqnarray*}
\sum_{\mathbf{x} \in {\cal X}}C(\mathbf{x}) &=& \sum_{\mathbf{x} \in {\cal X}}\vv(\mathbf{x}\mathbf{x}^T)^T\vv(\mathbf{\hat{A}}^p) \\
&=& \vv(\mathbf{A})^T\vv(\mathbf{\hat{A}}^p) \\
&=& \texttt{Trace}(\mathbf{A}^T\mathbf{\hat{A}}^p) \\
&=& \texttt{Trace}(\mathbf{A}^T\mathbf{A}^p) / \rho(\mathbf{A}^p)\\
&=& \left(\sum_i \lambda_i^{1+p}\right)/\rho(\mathbf{A}^p)
\end{eqnarray*}
\end{proof}

\item The maximum value $M = \max_{\mathbf{x} \in {\cal X}} C(\mathbf{x}) \leq r_{\max} \lambda_1^{p}/\rho(\mathbf{A}^p)$.

\begin{proof}
We have
\begin{eqnarray*}
C(\mathbf{x}) &=& \vv(\mathbf{x}\mathbf{x}^T)^T\vv(\mathbf{\hat{A}}^p) \\
&=& \texttt{Trace}((\mathbf{x}\mathbf{x}^T)^T\mathbf{\hat{A}}^p) \\
&=& \texttt{Trace}(\mathbf{x}^T\mathbf{A}^p\mathbf{x}) / \rho(\mathbf{A}^p) \\
& \leq& ||\mathbf{x}||^2\lambda_1^p / \rho(\mathbf{A}^p) \\
&\leq& r_{\max} \lambda_1^{p}/\rho(\mathbf{A}^p)
\end{eqnarray*}
\end{proof}

\item The minimum value $m = \min_{\mathbf{x} \in {\cal X}} C(\mathbf{x}) \geq r_{\min} \lambda_d^{p}/\rho(\mathbf{A}^p)$.

\begin{proof}
We have:
\begin{eqnarray*}
C(\mathbf{x}) &=& \texttt{Trace}(\mathbf{x}^T\mathbf{A}^p\mathbf{x}) / \rho(\mathbf{A}^p) \\
&\geq& ||\mathbf{x}||^2\lambda_d^p / \rho(\mathbf{A}^p) \\
&\geq& r_{\min} \lambda_d^{p}/\rho(\mathbf{A}^p)
\end{eqnarray*}
\end{proof}
\end{enumerate}

\subsection*{Proof of Proposition~\ref{prop:linear_span}}


\begin{proof}
Here is an example where the matrix power $\mathbf{A}^p$ does not lie in the linear span of the outer-products of the features $\mathbf{x} \in {\cal X }$. Consider two vectors $\mathbf{x}_1 = [1~0]^T$ and $\mathbf{x}_2 = [1~1]^T$. The covariance matrix $\mathbf{A}$ formed by the two is 
\begin{eqnarray*}
\mathbf{A} &=& \mathbf{x}_1\mathbf{x}_1^T + \mathbf{x}_2\mathbf{x}_2^T \\
			&=& \left[ \begin{array}{cc} 1 & 0 \\ 0 & 0\\\end{array} \right] +
				\left[ \begin{array}{cc} 1 & 1 \\ 1 & 1\\\end{array} \right]  \\
			&=& \left[ \begin{array}{cc} 2 & 1 \\ 1 & 1\\\end{array} \right] 
\end{eqnarray*}
The square root of the matrix $\mathbf{A}$ is:
\begin{eqnarray*}
\mathbf{A}^{1/2} = \left[ \begin{array}{c c} 1.3416 & 0.4472 \\ 0.4472 & 0.8944\\\end{array} \right] 
\end{eqnarray*}
It is easy to see that $\mathbf{A}^{1/2}$ cannot be written as a linear combination of $\mathbf{x}_1\mathbf{x}_1^T$ and $\mathbf{x}_2\mathbf{x}_2^T$ since any linear combination will have all equal values for all the entries except possibly the top left value.

A sufficient condition for $\mathbf{A}^p$ to be in the linear span of outer products is that the vectors $\mathbf{x} \in {\cal X}$ which are used in constructing $\mathbf{A}$ be orthogonal to each other.
This however, is not true in general for features extracted from convolutional layers.
\end{proof}

\subsection*{Aggregating first- versus second-order features}

\begin{table}[h]
\setlength{\tabcolsep}{10pt}
\renewcommand{\arraystretch}{1.1}
\centering
\begin{tabular}{l|c|c|c}
Dataset & DTD & FMD & MIT indoor \\
\hline
First order & 72.1 ${\pm}$ 0.7 & 84.2 ${\pm}$ 1.2 & 80.4 \\
Second order &  76.2 ${\pm}$ 0.7 & 84.3 ${\pm}$ 1.5 & 84.3 \\
\end{tabular} 
\vspace{0.1in}
\caption{\label{tab:supp_table} Comparison between the $\gamma$-democratic pooling using first- and second-order features with $\gamma$=0.5. For DTD and FMD datasets, we report the mean accuracy and the standard deviation over 10 splits.}
\end{table}


In this section, we show that aggregating second-order features is more effective than aggregating first-order features using $\gamma$-democratic pooling. Table~\ref{tab:supp_table} compares the results obtained on the first-order representations with the results on second-order features for $\gamma=0.5$ on DTD, FMD and MIT indoor datasets. We follow the same protocol as detailed in Section~\ref{sec:ts} in the main paper. Specifically, we aggregate the 2048 dimensional ResNet-101 features from the last convolutional layers using $448 \times 448$ image size. The second-order features are approximated by Tensor Sketching to obtain 8192 dimensional representation. On the small-scale FMD dataset which contains only 10 categories, the first-order aggregation works comparable to aggregating second-order features. In contrast, aggregating second-order features on the larger and more challenging DTD and MIT indoor datasets improves results over the first-order features by a significant margin (\textbf{$\sim$4\%}). Such improvements demonstrate robustness of our second-order aggregation scheme.

%
%
%
\bibliographystyle{splncs04}
\bibliography{egbib}
%


\end{document}